\newtheorem{theorem}{Theorem}
\newtheorem{prop}{Proposition}
\title{A Meta-MDP Approach to Exploration \\ for Lifelong Reinforcement Learning}
\author{
Francisco M. Garcia \\
College of Information and Computer Sciences\\
University of Massachusetts Amherst\\
\texttt{fmgarcia@cs.umass.edu} \\
\And
Philip S. Thomas \\
College of Information and Computer Sciences\\
University of Massachusetts Amherst\\
\texttt{pthomas@cs.umass.edu} \\
}
\begin{document}

\maketitle

\begin{abstract}
In this paper we consider the problem of how a reinforcement learning agent that is tasked with solving a sequence of reinforcement learning problems (a sequence of Markov decision processes) can use knowledge acquired early in its lifetime to improve its ability to solve new problems. We argue that previous experience with similar problems can provide an agent with information about how it should \textit{explore} when facing a new but related problem. We show that the search for an optimal exploration strategy can be formulated as a reinforcement learning problem itself and demonstrate that such strategy can leverage patterns found in the structure of related problems. 
We conclude with experiments that show the benefits of optimizing an exploration strategy using our proposed approach.
\end{abstract}

\section{Introduction}

One hallmark of human intelligence is our ability to leverage knowledge collected over our lifetimes when we face a new problem. 
When we first drive a new car, we do not re-learn from scratch how to drive a car. 
Instead, we leverage our experience driving to quickly adapt to the new car (its handling, control placement, etc.). 
Standard \textit{reinforcement learning} (RL) methods lack this ability. 
When faced with a new problem---a new \textit{Markov decision process} (MDP)---they typically start from scratch, initially making decisions randomly to \textit{explore} and learn about the current problem they face. 

The problem of creating agents that can leverage previous experiences to solve new problems is called \textit{lifelong learning} or \textit{continual learning}, and is related to the problem of \textit{transfer learning}. 
%
In this paper, however, we focus on one aspect of lifelong learning: 
when faced with a sequence of MDPs sampled from a distribution over MDPs, how can a reinforcement learning agent learn an optimal policy for exploration? 
Specifically, we do not consider the question of \textit{when} an agent should explore or \textit{how much} an agent should explore, which is a well studied area of reinforcement learning research, \cite{count_based,count_based_feature,minimax_regret,ucb,pac_exploration}. 
Instead, we study the question of, given that an agent is going to explore, which action should it take?

After formally defining the problem of searching for an \textit{optimal exploration policy}, we show that this problem can itself be modeled as an MDP. 
This means that the task of finding an optimal exploration strategy for a learning agent can be solved by another reinforcement learning agent that is solving a new \textit{meta-MDP}. 
This meta-MDP operates at a different timescale from the RL agent solving specific MDPs---one episode of the meta-MDP corresponds to an entire lifetime of the RL agent. 
This difference of timescales distinguishes our approach from previous meta-MDP methods for optimizing components of reinforcement learning algorithms, \cite{comdp,optimal_reward_multiagent,hybrid_reward,multi_advisor,policy_reuse}.

We contend that using random action selection during exploration (as is common when using Q-learning, \cite{q-learning}, Sarsa, \cite{rl_book}, and DQN, \cite{dqn}) ignores useful information from the agent's experience with previous similar MDPs that could be leveraged to guide exploration. We separate the policies that define the agent's behavior into an exploration policy (which governs behavior when the agent is exploring) and an exploitation policy (which governs behavior when the agent is exploiting). 

In this paper we make the following contributions: {\bf 1)} we formally define the problem of searching for an optimal exploration policy, {\bf  2) } we prove that this problem can be modeled as a new MDP, and describe one algorithm for solving this meta-MDP, and {\bf 3)} we present experimental results that show the benefits of our approach. 
Although the search for an optimal exploration policy is only one of the necessary components for lifelong learning (along with deciding \textit{when} to explore, how to represent data, how to transfer models, etc.), it provides one key step towards agents that leverage prior knowledge to solve challenging problems.

\section{Related Work}
\label{related work}

There is a large body of work discussing the problem of \emph{how} an agent should behave during exploration \textit{when faced with a single MDP}. Simple strategies, such as $\epsilon$-greedy with random action-selection, \textit{Boltzmann action-selection} or \textit{softmax action-selection}, make sense when an agent has no prior knowledge of the problem that is currently trying to solve. The performance of an agent exploring with unguided exploration techniques, such as random action-selection, reduces drastically as the size of the state-space increases \cite{whitehead_qlearn}. For example, the performance of Boltzmann or softmax action-selection hinges on the accuracy of the action-value estimates. When these estimates are poor (e.g., early during the learning process), it can have a drastic negative effect on the overall learning ability of the agent. More sophisticated methods search for subgoal states to define temporally-extended actions, called \textit{options}, that explore the state-space more efficiently, \cite{bottleneck,subgoal}, use state-visitation counts to encourage the agent to explore states that have not been frequently visited, \cite{count_based,count_based_feature}, or use approximations of a state-transition graph to exploit structural patterns, \cite{pvf,eigenbehavior}. 

Recent research concerning exploration has also taken the approach of adding an exploration ``bonus'' to the reward function. VIME \cite{vime} takes a Bayesian approach by maintaining a model of the dynamics of the environment, obtaining a posterior of the model after taking an action, and using the KL divergence between these two models as a bonus. The intuition behind this approach is that encouraging actions that make large updates to the model allows the agent to better explore areas where the current model is inaccurate.  \cite{curiosity_driven} define a bonus in the reward function by adding an intrinsic reward. They propose using a neural network to predict state transitions based on the action taken and provide an intrinsic reward proportional to the prediction error. The agent is therefore encouraged to make state transitions that are not modeled accurately. 
Another relevant work in exploration was presented in \cite{policy_reuse}, where the authors propose building a library of policies from prior experience to explore the environment in new problems more efficiently.
These techniques are useful when an agent is dealing with a single MDP or class of MDPs with the same state-transition graph, however they do not provide a means to guide an agent to explore intelligently when faced with a novel task with different dynamics. 

Related to our approach is the idea of meta-learning, or learning to learn, which has also been a recent area of focus. \cite{learn_to_learn} proposed learning an update rule for a class of optimization problems. Given an objective function $f$ and parameters $\theta$, the authors proposed learning a model, $g_\phi$, such that the update to parameters $\theta_k$, at iteration $k$ are given according to $\theta_{k+1} = \theta_k + g_\phi(\nabla f(\theta_k))$. RL has also been used in meta-learning to learn efficient neural network architectures \cite{clemens_meta}. However, even though one can draw a connection to our work through meta-learning, these methods are not concerned with the problem of exploration. 

In the context of RL, a similar idea can be applied by defining a meta-MDP, i.e., considering the agent as part of the environment in a larger MDP. In multi-agent systems, \cite{optimal_reward_multiagent} considered other agents as part of the environment from the perspective of each individual agent. \cite{comdp} proposed the conjugate MDP framework, in which agents solving meta-MDPs (called CoMDPs) can search for the state representation, action representation, or options that maximize the expected return when used by an RL agent solving a single MDP. 
Despite existing meta-MDP approaches, to the best of our knowledge, ours is the first to use the meta-MDP approach to specifically optimize exploration for a set of related tasks.

\section{Background}
\label{background}

A \textit{Markov decision process} (MDP) is a tuple, $M = (\mathcal S, \mathcal A,P,R,d_0)$, where $\mathcal S$ is the set of possible states of the environment, $\mathcal A$ is the set of possible actions that the agent can take, $P(s,a,s')$ is the probability that the environment will transition to state $s'\in \mathcal S$ if the agent takes action $a \in \mathcal A$ in state $s \in \mathcal S$, $R(s,a,s')$ is a function denoting the reward received after taking action $a$ in state $s$ and transitioning to state $s'$, and $d_0$ is the initial state distribution. 
We use $t \in \{0,1,2,\dots,T\}$ to index the time-step, and write $S_t$, $A_t$, and $R_t$ to denote the state, action, and reward at time $t$. 
We also consider the \textit{undiscounted} episodic setting, wherein rewards are not discounted based on the time at which they occur. 
We assume that $T$, the maximum time step, is finite, and thus we restrict our discussion to \textit{episodic} MDPs; that is, after $T$ time-steps the agent resets to some initial state. We use $I$ to denote the total number of episodes the agent interacts with an environment. 
A \textit{policy}, $\pi: \mathcal S \times \mathcal A \to [0,1]$, provides a conditional distribution over actions given each possible state: $\pi(s,a)=\Pr(A_t=a|S_t=s)$. 
Furthermore, we assume that for all policies, $\pi$, (and all tasks, $c \in \mathcal C$, defined later) the expected returns are normalized to be in the interval $[0,1]$. 

One of the key challenges within RL, and the one this work focuses on, is related to the \emph{exploration-exploitation dilemma}. To ensure that an agent is able to find a good policy, it shoudl act with the sole purpose of gathering information about the environment (exploration). However, once enough information is gathered, it should behave according to what it believes to be the best policy (exploitation). In this work, we separate the behavior of an RL agent into two distinct policies: an \emph{exploration} policy and an \emph{exploitation} policy. We assume an $\epsilon$-greedy exploration schedule, i.e., with probability $\epsilon_i$ the agent explores and with probability $1-\epsilon_i$ the agent exploits, where $(\epsilon_i)_{i=1}^I$ is a sequence of exploration rates where $\epsilon_i \in [0,1]$ and $i$ refers to the episode number in the current task. We note that more sophisticated decisions on \emph{when} to explore are certainly possible and could exploit our proposed method. Assuming this exploration strategy the agent forgoes the ability to learn when it should explore and we assume that the decision as to whether the agent explores or not is random. That being said, $\epsilon$-greedy is currently widely used (e.g.,SARSA \cite{rl_book}, Q-learning \cite{q-learning}, DQN \cite{dqn}) and its popularity makes its study still relevant today.

Let $\mathcal{C}$ be the set of all tasks, $c = (\mathcal{S}, \mathcal{A}, P_c, R_c, d_0^c)$. That is, all $c \in \mathcal{C}$ are MDPs sharing the same state-set $\mathcal{S}$ and action-set $\mathcal{A}$, which may have different transition functions $P_c$, reward functions $R_c$, and initial state distributions $d_0^c$. An agent is required to solve a set of tasks $c \in \mathcal{C}$, where we refer to the set $\mathcal{C}$ as the \emph{problem class}. Given that each task is a separate MDP, the exploitation policy might not directly apply to a novel task. In fact, doing this could hinder the agent's ability to learn an appropriate policy. This type of scenarios arise, for example, in control problems where the policy learned for one specific agent will not work for another due to differences in the environment dynamics and physical properties. As a concrete example, Intelligent Control Flight Systems (ICFS) is an area of study that was born out of the necessity to address some of the limitations of PID controllers; where RL has gained significant traction in recent years \cite{IFCS_1, IFCS_2}. One particular scenario were our proposed problem would arise is in using RL to control autonomous vehicles \cite{IFCS_4}, where a single control policy would likely not work for a number of distinct vehicles and each policy would need to be adapted to the specifics of each vehicle. Under this scenario, if $\mathcal{C}$ refers to learning a policy to for vehicles to stay in a specific lane, for example, each task $c \in \mathcal{C}$ could refer to learning to stay in a lane for a vehicle with its own individual dynamics. 

In our framework, the agent has a task-specific policy, $\pi$, that is updated by the agent's own learning algorithm. This policy defines the agent's behavior during exploitation, and so we refer to it as the \emph{exploitation policy}.
The behavior of the agent during exploration is determined by an \emph{advisor}, which maintains a policy tailored to the problem class (i.e., it is shared across all tasks in  $\mathcal{C}$). We refer to this policy as an \emph{exploration policy}. The agent is given $K=IT$ time-steps of interactions with each of the sampled tasks. Hereafter we use $i$ to denote the index of the current episode on the current task, $t$ to denote the time step within that episode, and $k$ to denote the number of time steps that have passed on the current task, i.e., $k=iT+t$,  and we refer to $k$ as the \textit{advisor time step}.
At every time-step, $k$, the advisor suggests an action, $U_k$, to the agent, where $U_k$ is sampled according to $\mu$. 
If the agent decides to explore at this step, it takes action $U_k$, otherwise it takes action $A_k$ sampled according to the agent's policy, $\pi$.  We refer to an optimal policy for the agent solving a specific task, $c \in \mathcal C$, as an \emph{optimal exploitation policy}, $\pi_c^*$. 
More formally: $\pi_c^* \in \underset{\pi}{\text{argmax}} \; \mathbf{E}\left[ G | \pi, c\right]$, where $G = \sum_{t=0}^T R_t$ is referred to as the return.
%
Thus, the agent solving a specific task is optimizing the standard expected return objective. From now on we refer to the agent solving a specific task as the \textit{agent} (even though the advisor can also be viewed as an agent). 

Intuitively, we consider a process that proceeds as follows. First, a task, $c \in \mathcal C$ is sampled from some distribution, $d_{\mathcal{C}}$, over $\mathcal C$.
Next, the agent uses some pre-specified reinforcement learning algorithm (e.g., Q-learning or Sarsa) to approximate an optimal policy on the sampled task, $c$. 
Whenever the agent decides to explore, it uses an action provided by the \textit{advisor} according to its policy, $\mu$. 
After the agent completes $I$ episodes on the current task, the next task is sampled from $\mathcal C$ and the agent's policy is reset to an initial policy. 
%

\section{Problem Statement}
\label{sec:prob_statement}

We define the performance of the advisor's policy, $\mu$, for a specific task $c \in \mathcal C$ to be $\rho(\mu, c) = \mathbf{E}\left[ \sum_{i=0}^I \sum_{t=0}^T R_t^i \middle | \mu, c \right],$
where $R_t^i$ is the reward at time step $t$ during the $i^\text{th}$ episode. Let $C$ be a random variable that denotes a task sampled from $d_{\mathcal{C}}$. The goal of the advisor is to find an \emph{optimal exploration policy}, $\mu^*$, which we define to be any policy that satisfies:
\begin{equation}
\begin{aligned}
& \mu^* \in \underset{\mu}{\text{argmax}} &   \mathbf{E}\left[ \rho (\mu, C) \right ].
\end{aligned}
\label{eq:opt-explore}
\end{equation}

We cannot directly optimize this objective because we do not know the transition and reward functions of each MDP, and we can only sample tasks from $d_\mathcal{C}$. 
In the next section we show that the search for an exploration policy can be formulated as an RL problem where the advisor is itself an RL agent solving an MDP whose environment contains both the current task, $c$, and the agent solving the current task.

\section{A General Solution Framework}
\label{framework}

\begin{figure}
\centering
\includegraphics[width=0.6\linewidth]{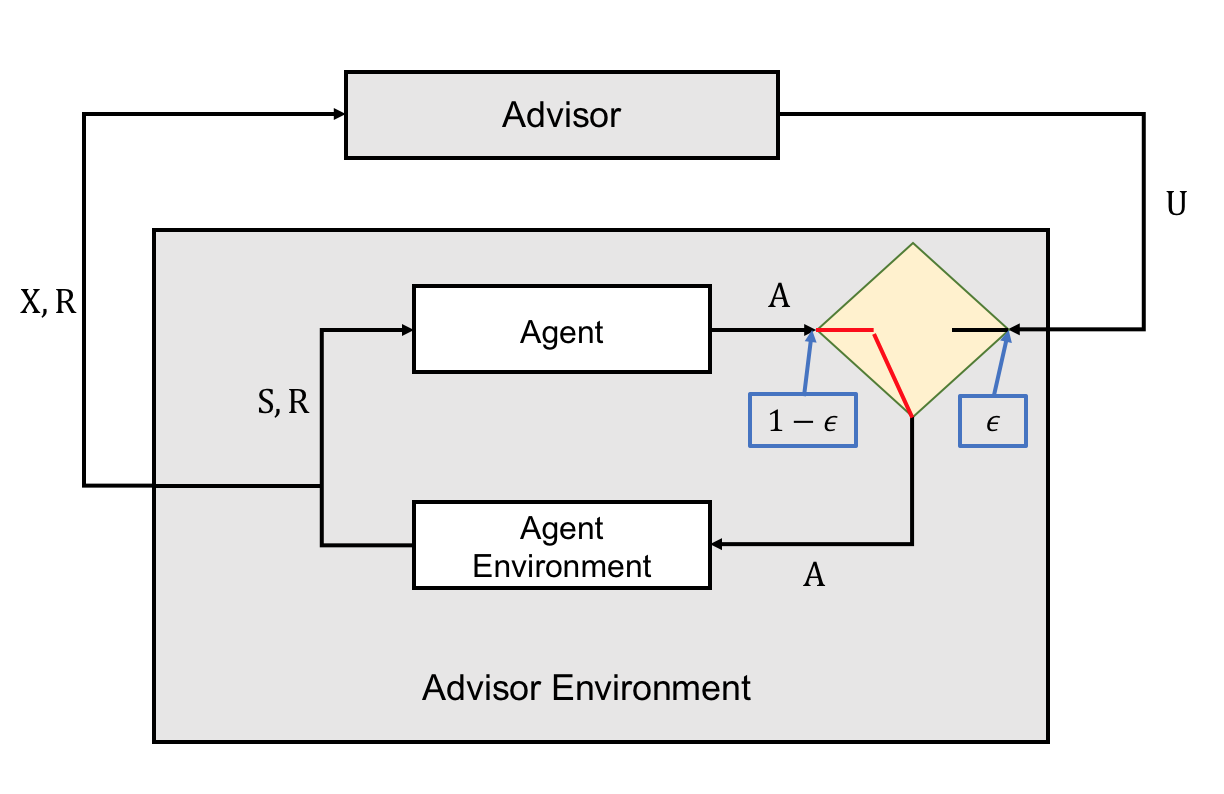}
\caption{MDP view of interaction between the advisor and agent. At each time-step, the advisor selects an action $U$ and the agent an action $A$. With probability $\epsilon$ the agent executes action $U$ and with probability $1-\epsilon$ it executes action $A$. After each action the agent and advisor receive a reward $R$, the agent and advisor environment transitions to states $S$ and $X$, respectively.}
\label{fig:advisor}
\end{figure}

Our framework can be viewed as a meta-MDP---an MDP within an MDP. From the point of view of the agent, the environment is the current task, $c$ (an MDP). 
However, from the point of view of the advisor, the environment contains both the task, $c$, and the agent. 
At every time-step, the advisor selects an action $U$ and the agent an action $A$. The selected actions go through a selection mechanism which executes action $A$ with probability $1-\epsilon_i$ and action $U$ with probability $\epsilon_i$ at episode $i$. Figure \ref{fig:advisor} depicts the proposed framework with action $A$ (exploitation) being selected.
Even though one time step for the agent corresponds to one time step for the advisor, one episode for the advisor constitutes a lifetime of the agent. 
From this perspective, wherein the advisor is merely another reinforcement learning algorithm, we can take advantage of the existing body of work in RL to optimize the exploration policy, $\mu$.

We experimented training the advisor policy using two different RL algorithms: REINFORCE, \cite{reinforce}, and Proximal Policy Optimization (PPO), \cite{PPO}. Using Montercarlo methods, such as REINFORCE, results in a simpler implementation at the expense of a large computation time (each update of the advisor would require to train the agent for an entire lifetime). On the other hand, using temporal difference method, such as PPO, overcomes this computational bottleneck at the expense of larger variance in the performance of the advisor.

Pseudocode for an implementation of our framework using REINFORCE, where the meta-MDP is trained for $I_{meta}$ episodes, is described in Algorithm \ref{algo-general}. Pseudocode for our implementation of PPO is presented in the Appendix B.

\begin{algorithm}
\caption{Agent + Advisor - REINFORCE}
\label{algo-general}
\begin{algorithmic}[1]
\State Initialize advisor policy $\mu$ randomly
\For{$i_{meta}=0,1,\dots,I_{meta}$}
	\State Sample task $c$ from $d_c$
	\For{$i=0,1,\dots,I$}
		\State Initialize $\pi$ to $\pi_0$
		\State $s_t \sim d^c_0$ 
		\For{$t=0,1,\dots,T$}
			\State $a_t \sim \left\{
                \begin{array}{ll}
                  \mu \text{ with probability } \epsilon_i  \\
                  \pi \text{ with probability } (1-\epsilon_i)\\
                \end{array}
              \right.$
              
              \State take action $a_t$, observe $s_t$, $r_t$
              
        \EndFor
        
        \For{$t=0,1,\dots,T$}
            \State update policy $\pi$ using REINFORCE with $s_t, a_t, r_t$
        \EndFor
        
    \EndFor
	
	\For{$k=0,1,\dots,IT$}
        \State update policy $\mu$ using REINFORCE with $s_k, a_k, r_k$
    \EndFor
\EndFor  
\end{algorithmic}
\end{algorithm}

\subsection{Theoretical Results}

Below, we formally define the meta-MDP faced by the advisor and show that an optimal policy for the meta-MDP optimizes the objective in \eqref{eq:opt-explore}. 
Recall that $R_c$, $P_c,$ and $d_0^c$ denote the reward function, transition function, and initial state distribution of the MDP $c \in \mathcal C$. 

To formally describe the meta-MDP, we must capture the property that the agent can implement an arbitrary RL algorithm. To do so, we assume the agent maintains some memory, $M_k$, that is updated by some learning rule $l$ (an RL algorithm) at each time step, and write $\pi_{M_k}$ to denote the agent's policy given that its memory is $M_k$. In other words, $M_k$ provides all the information needed to determine $\pi_{M_k}$ and its update is of the form $M_{k+1} = l(M_k, S_k, A_k, R_k, S_{k+1})$ (this update rule can represent popular RL algorithms like Q-Learning and actor-critics). We make no assumptions about which learning algorithm the agent uses (e.g., it can use Sarsa, Q-learning, REINFORCE, and even batch methods like Fitted Q-Iteration), and consider the learning rule to be unknown and a source of uncertainty.

\begin{prop}

Consider an advisor policy, $\mu$, and episodic tasks $c \in \mathcal{C}$ belonging to a problem class $\mathcal{C}$. The problem of learning $\mu$ can be formulated as an MDP, $M_\text{meta} = (\mathcal X, \mathcal U, T, Y, d'_0)$, where $\mathcal X$ is the state space, $\mathcal U$ the action space, $T$ the transition function, $Y$ the reward function, and $d'_0$ the initial state distribution.
\end{prop}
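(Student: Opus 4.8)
The plan is to construct the tuple $M_{\text{meta}} = (\mathcal{X}, \mathcal{U}, T, Y, d'_0)$ explicitly and then check two things: that the constructed object really is an MDP (in particular, that the Markov property holds), and that maximizing its expected return under an advisor policy $\mu$ is the same optimization problem as \eqref{eq:opt-explore}. For the state space I would bundle into a single meta-state everything that the advisor's reward, next observation, and successor depend on: the sampled task $c$, the current environment state $S_k$, the agent's current memory $M_k$ (which by assumption determines $\pi_{M_k}$), and the counters $i$ and $t$ (needed because $\epsilon_i$ depends on the episode index and because the within-task episode resets when $t = T$). Thus $\mathcal{X} = \mathcal{C} \times \mathcal{S} \times \mathcal{M} \times \{0,\dots,I\} \times \{0,\dots,T\}$, with the absorbing set being the states having $i = I$ (one meta-episode corresponds to one agent lifetime); $\mathcal{U} = \mathcal{A}$; and $d'_0$ samples $c \sim d_{\mathcal{C}}$, then $S_0 \sim d_0^c$, sets $M_0$ to the agent's initial memory, and sets $i = t = 0$.

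For the transition function $T$ from $x_k = (c, S_k, M_k, i, t)$ under meta-action $U_k$: first draw the realized agent action $A_k$, equal to $U_k$ with probability $\epsilon_i$ and drawn from $\pi_{M_k}(S_k,\cdot)$ with probability $1-\epsilon_i$; then $S_{k+1} \sim P_c(S_k, A_k, \cdot)$ if $t < T$ and $S_{k+1} \sim d_0^c$ if $t = T$; set $R_k = R_c(S_k, A_k, S_{k+1})$; update the memory $M_{k+1} = l(M_k, S_k, A_k, R_k, S_{k+1})$; and advance the counters ($t \leftarrow t+1$, or $t \leftarrow 0$ and $i \leftarrow i+1$ at an episode boundary). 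The reward is $Y(x_k, U_k, x_{k+1}) = R_k$. Because $R_k$ depends on the hidden coin flip and on $\pi_{M_k}$, which cannot be recovered from $(x_k, U_k, x_{k+1})$ alone, I would either take $Y$ to be the expectation of $R_k$ given $(x_k, U_k)$ (legitimate, since only expected rewards enter returns and value functions) or augment the meta-state with the realized pair $(A_k, R_k)$ so that the reward becomes a deterministic function of the transition; either resolution is routine.

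Then the verification: the Markov property holds because each ingredient above — the law of $A_k$, of $S_{k+1}$, of $R_k$, of $M_{k+1}$, and the deterministic counter update — is a fixed (possibly stochastic) function of $(x_k, U_k)$ alone, with all task-specific dependence carried inside the $c$-component of $x_k$; hence $\Pr(X_{k+1} \mid X_{0:k}, U_{0:k}) = \Pr(X_{k+1} \mid X_k, U_k)$, and likewise for the rewards, so $M_{\text{meta}}$ is an MDP with finite horizon $IT$. For the objective, the (undiscounted) return accumulated along one meta-episode is $\sum_k Y_k = \sum_{i=0}^{I}\sum_{t=0}^{T} R_t^i$, so its expectation under $\mu$ with $X_0 \sim d'_0$ equals $\mathbf{E}_{C \sim d_{\mathcal{C}}}\big[\mathbf{E}[\sum_{i}\sum_{t} R_t^i \mid \mu, C]\big] = \mathbf{E}[\rho(\mu, C)]$; therefore an optimal policy of $M_{\text{meta}}$ is exactly a solution of \eqref{eq:opt-explore}.

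The genuinely delicate points, which is where I expect the main effort to go, are: (i) making precise that an arbitrary, unknown learning algorithm can be packaged as a Markov ``memory plus update rule'' $M_{k+1} = l(M_k, S_k, A_k, R_k, S_{k+1})$ with no hidden state left over, so that the construction truly covers Q-learning, actor–critics, Fitted-Q-Iteration, and batch methods — this is exactly what the assumption introduced just before the proposition is buying us; and (ii) the reward bookkeeping around the hidden exploration coin flip described above. I would also note, for honesty, that placing $c$ inside $\mathcal{X}$ is what turns the object into a bona fide MDP rather than a POMDP; a policy on $M_{\text{meta}}$ is formally permitted to depend on $c$, but the return identity above holds for every such $\mu$, so the equivalence with \eqref{eq:opt-explore} is unaffected. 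The remaining items — finiteness of $IT$, normalization of returns to $[0,1]$ — are immediate from the Background assumptions.
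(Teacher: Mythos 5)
Your construction matches the paper's essentially point for point: the meta-state bundles the task $c$, the environment state, the episode counter, and the agent's memory $M_k$; the meta-action set is $\mathcal A$; the transition is the $\epsilon_i$-mixture of the advisor's suggested action and the agent's policy $\pi_{M_k}$; the reward is the corresponding expected task reward (the paper conditions on $(x,u,x')$, one of the two resolutions you sketch); and $d'_0$ composes $d_{\mathcal C}$ with $d_0^c$. The only differences are bookkeeping (you carry an explicit within-episode counter $t$ and end the meta-episode at $i=I$, whereas the paper detects episode boundaries via terminal states and builds the resampling of a new task into $T$), and you additionally verify the return identity with \eqref{eq:opt-explore}, which the paper defers to Theorem~\ref{thm:meta_optimal}.
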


\begin{proof}
To show that $M_\text{meta}$ is a valid MDP it is sufficient to characterize the MDP's state set, $X$, action set, $U$, transition function, $T$, reward function, $Y$, and initial state distribution $d'_0$.  We assume that when facing a new task, the agent memory, $M$, is initialized to some fixed memory $M_0$ (defining a default initial policy and/or value function). The following definitions fully characterize the meta-MDP the advisor faces:

\begin{itemize}
\item $\mathcal X = \mathcal{S} \times \mathcal{I} \times \mathcal{C} \times \mathcal{M}$. That is, the state set $\mathcal X$ is a set defined such that each state, $x=(s,i,c, M)$ contains the current task, $c$, the current state, $s$, in the current task, the current episode number, $i$, and the current memory, $M$, of the agent. 

\item $\mathcal U = \mathcal{A}$. That is, the action-set is the same as the action-set of the problem class, $\mathcal C$.

\item $T$ is the transition function, and is defined such that $T(x,u,x')$ is the probability of transitioning from state $x \in \mathcal X$ to state $x' \in \mathcal X$ upon taking action $u \in \mathcal U$. Assuming the underlying RL agent decides to explore with probability $\epsilon_i$ and to exploit with probability $1-\epsilon_i$ at episode $i$, then $T$ is as follows. 

If $s$ is terminal and $i \neq I-1$, then $T(x,u,x') = d_0^c(s')\mathbf{1}_{c'=c,i'=i+1,M'=l(M,s,a,r,s')}$. 

If $s$ is terminal and $i = I-1$, then $T(x,u,x') = d_\mathcal{C}(c')d_0^{c'}(s')\mathbf{1}_{i'=0,M'=M_0}$. 

Otherwise, $T(x,u,x') = \left (\epsilon_i P_c(s,u,s')+(1-\epsilon_i)\sum_{a \in A_c} \pi_{M}(s,a) P_c(s,a,s') \right ) \\ \times \mathbf{1}_{c'=c,i'=i,M'=l(M,s,a,r,s')}$

\item $Y$ is the reward function, and defines the reward obtained after taking action $u\in \mathcal U$ in state $x \in \mathcal X$  and transitioning to state $x' \in \mathcal X$ $Y(x,u,x') = \\ \frac{\epsilon_i P_c(s,u,s') R_c(s,u,s') + (1-\epsilon_i) \sum_{a \in \mathcal A} \pi_{M}(a,s) P_c(s,a,s')R_c(s,a,s')}{\epsilon_i P_c(s,u,s') + (1-\epsilon_i) \sum_{a \in \mathcal A} \pi_{M}(a,s) P_c(s,a,s')}$.

\item $d'_0$ is the initial state distribution and is defined by: $d'_0(x) = d_\mathcal{C}(c)d_0^c(s)\mathbf{1}_{i=0}$. 
\end{itemize}

\end{proof}

Now that we have fully described $M_{meta}$, we are able to show that the optimal policy for this new MDP corresponds to an optimal exploration policy.

\begin{theorem}
\label{thm:meta_optimal}
An optimal policy for $M_\text{meta}$ is an optimal exploration policy, $\mu^*$, as defined in \eqref{eq:opt-explore}. That is,
$\mathbf{E}\left[ \rho(\mu, C) \right] = \mathbf{E}\left[ \sum_{k=0}^K Y_k \middle | \mu, \right]$. 

\end{theorem}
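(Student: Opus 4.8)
The plan is to reduce the theorem to a single identity: for \emph{every} advisor policy $\mu$,
\[
\mathbf{E}\!\left[\textstyle\sum_{k=0}^{K} Y_k \,\middle|\, \mu\right] \;=\; \mathbf{E}\!\left[\rho(\mu,C)\right],
\]
where on the left the expectation is over trajectories of $M_\text{meta}$ started from $d'_0$ and following $\mu$, and on the right $C\sim d_\mathcal{C}$. Since the undiscounted episodic objective of the meta-MDP is by definition its expected return $\mathbf{E}[\sum_k Y_k\mid\mu]$, once this identity is established the two optimization problems have identical objective functions, so their $\operatorname{argmax}$ sets coincide; in particular any optimal policy of $M_\text{meta}$ satisfies \eqref{eq:opt-explore}, which is exactly the claim. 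Thus the entire theorem reduces to verifying the displayed equation.

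To prove the identity I would set up an explicit coupling between a meta-MDP trajectory and a ``real'' lifetime of the agent. The real process is: draw $c\sim d_\mathcal{C}$, initialize the agent's memory to $M_0$ (hence its policy to $\pi_{M_0}$), and for $i=0,\dots,I$ run episodes of task $c$ in which, at each step, with probability $\epsilon_i$ the executed action is the advisor's suggestion $U_k\sim\mu(X_k,\cdot)$ and with probability $1-\epsilon_i$ it is $A_k\sim\pi_{M_k}(S_k,\cdot)$, with the memory updated by $M_{k+1}=l(M_k,S_k,A_k,R_k,S_{k+1})$ and episode/task resets as in Section~\ref{background}. I claim that, under this coupling, the meta-state $X_k=(S_k,i_k,c,M_k)$ has, for each $k$, exactly the distribution prescribed by $d'_0$ and $T$. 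This is a straightforward induction on $k$: the base case is immediate from $d'_0(x)=d_\mathcal{C}(c)d_0^c(s)\mathbf{1}_{i=0}$; for the inductive step one conditions on $X_k=(s,i,c,M)$ and checks that marginalizing the real process over the explore/exploit coin and over $A_k$ reproduces each branch of $T$ --- the ``otherwise'' branch gives the mixture $\epsilon_iP_c(s,u,\cdot)+(1-\epsilon_i)\sum_a\pi_M(s,a)P_c(s,a,\cdot)$ on $S_{k+1}$ together with the (random, via $l$) update of $M$, while the two terminal branches reproduce the within-task episode reset and the end-of-lifetime reset.

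The reward bookkeeping is where the definition of $Y$ earns its keep, and is the part I expect to require the most care. In the real process $R_k$ depends on the \emph{realized} action, which has been integrated out of the meta-state transition, so $Y$ is defined as the conditional expectation of $R_k$ given $(X_k,U_k,X_{k+1})$: its numerator is $\mathbf{E}[R_k\mathbf{1}_{S_{k+1}=s'}\mid\cdot]$ and its denominator is $\Pr(S_{k+1}=s'\mid\cdot)$, i.e. the $T$-weight. Hence the sum $\sum_{x'}T(x,u,x')Y(x,u,x')$ telescopes: the denominators cancel and one is left with $\epsilon_i\sum_{s'}P_c(s,u,s')R_c(s,u,s')+(1-\epsilon_i)\sum_a\pi_M(s,a)\sum_{s'}P_c(s,a,s')R_c(s,a,s')$, which is precisely $\mathbf{E}[R_k\mid S_k=s,M_k=M,c]$ in the real process. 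Summing over $k=0,\dots,K$, applying the tower rule together with the distributional identity from the previous paragraph, and using $K=IT$ so that the first $K$ steps are exactly one lifetime on the sampled task, yields $\mathbf{E}[\sum_k Y_k\mid\mu,c]=\rho(\mu,c)$ for each fixed $c$; averaging over $c\sim d_\mathcal{C}$ gives the identity.

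Two technical points I would flag rather than grind through. First, the degenerate case: when $\epsilon_iP_c(s,u,s')+(1-\epsilon_i)\sum_a\pi_M(s,a)P_c(s,a,s')=0$ the ratio defining $Y$ is $0/0$; but then $T(x,u,x')=0$ as well, so adopting the convention $Y(x,u,x')=0$ on such unreachable $s'$ leaves $\sum_{x'}T(x,u,x')Y(x,u,x')$ unchanged. Second, the indicator $\mathbf{1}_{M'=l(M,s,a,r,s')}$ is written in terms of the realized $a$ (and $r$); the rigorous statement sums this over $a\in\mathcal{A}$ weighted by $\epsilon_i$ for $a=u$ and $(1-\epsilon_i)\pi_M(s,a)$ otherwise (and over $r$ if $R_c$ is stochastic), so that $M_{k+1}$ is a genuine stochastic function of $(X_k,U_k,X_{k+1})$ and $M_\text{meta}$ is Markov --- it is precisely the inclusion of $M$, $c$, and $i$ in the meta-state that guarantees this, and this is the only reason the agent's unknown learning rule $l$ does not break the Markov property. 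Both points are routine once identified, so the substance of the argument is the coupling and the telescoping of $Y$ described above.
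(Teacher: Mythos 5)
Your proposal is correct and takes essentially the same route as the paper's Appendix~A proof: the paper's Lemma~1 is precisely your ``telescoping'' observation that $\sum_{x'} T(x,u,x')\,Y(x,u,x')$ collapses (the denominator of $Y$ cancelling against $T$) to the agent's expected one-step reward under the $\epsilon_i$-mixture of $\mu$ and $\pi_{M}$, and the theorem then follows, as in the paper, by reducing optimality to the identity $\mathbf{E}[\rho(\mu,C)]=\mathbf{E}[\sum_{k=0}^{K}Y_k\mid\mu]$, re-indexing $k=iT+t$, and applying the tower rule over the lifetime. Your induction establishing that the meta-state $X_k=(S_k,i,c,M_k)$ has the distribution prescribed by $d'_0$ and $T$, and your two flagged technicalities (the $0/0$ convention for $Y$ and marginalizing the realized action in the memory-update indicator), make explicit steps the paper treats informally, but they do not change the substance of the argument.
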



\begin{proof}
See Appendix A.
\end{proof}

Since $M_\text{meta}$ is an MDP for which an optimal exploration policy is an optimal policy, it follows that the convergence properties of reinforcement learning algorithms apply to the search for an optimal exploration policy. For example, in some experiments the advisor uses the REINFORCE algorithm \cite{reinforce}, the convergence properties of which have been well-studied \cite{reinforce_convergence}.

Although the framework presented thus far is intuitive and results in nice theoretical properties (e.g., methods that guarantee convergence to at least locally optimal exploration policies), each episode corresponds to a new task $c \in \mathcal C$ being sampled. 
This means that training the advisor may require to solve a large number of tasks (episodes of the meta-MDP), each one potentially being an expensive procedure. To address this issue, we sampled a small number of tasks $c_1,\dotsc, c_n$, where each $c_i \sim d_{\mathcal{C}}$ and train many episodes on each task in parallel. By taking this approach, every update to the advisor is influenced by several simultaneous tasks and results in an scalable approach to obtain a general exploration policy. In more difficult tasks, which might require the agent to train a long time, using TD techniques allows the advisor to improve its policy while the agent is still training.

\begin{figure}
    \begin{subfigure}[t]{0.2\linewidth}
    	\includegraphics[width=40mm, height=30mm]{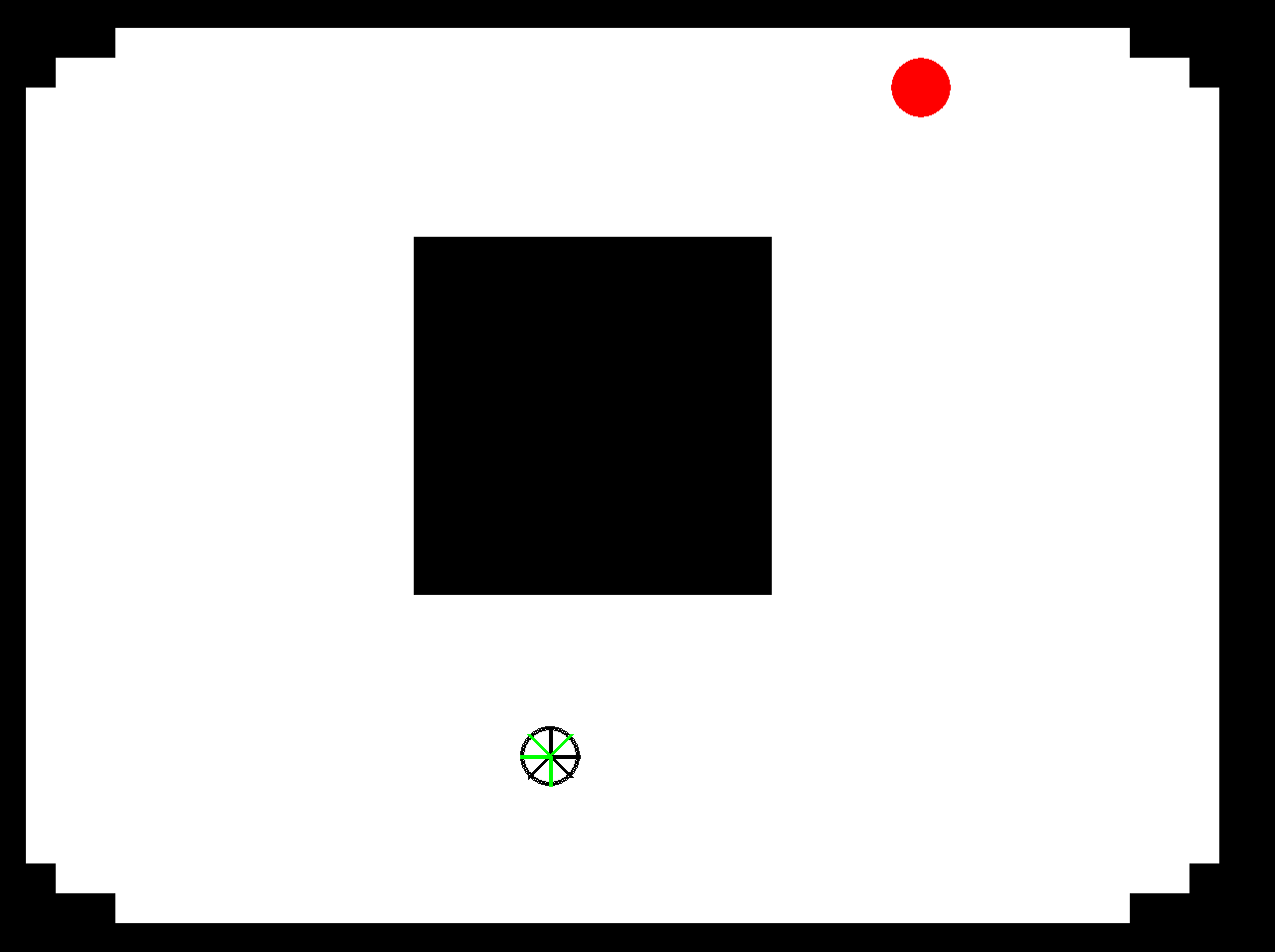}
        \caption{Animat task 1.}
        \label{fig:animat-task1}
    \end{subfigure}
    ~~~~~
    \begin{subfigure}[t]{0.2\linewidth}
    	\includegraphics[width=40mm, height=30mm]{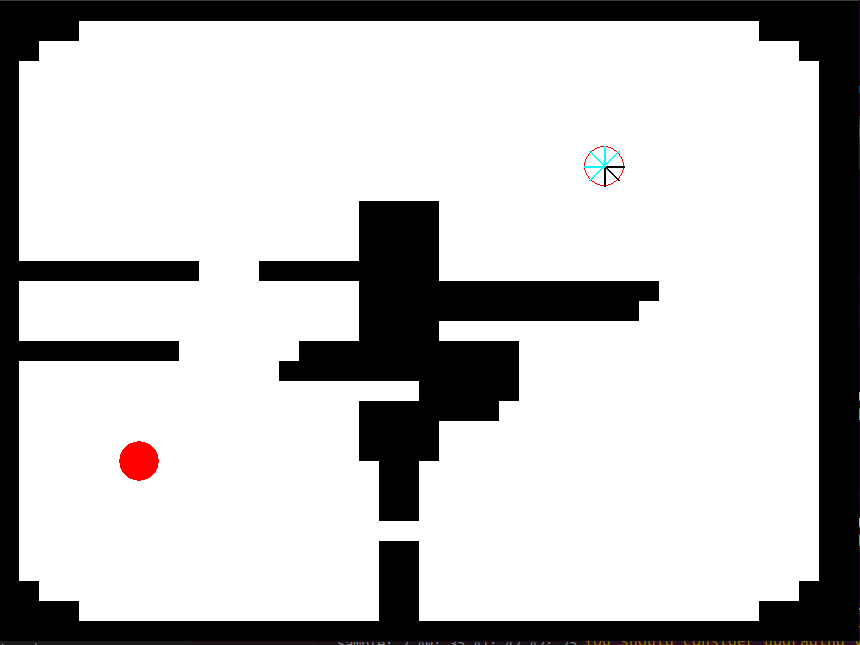}
        \caption{Animat task 2.}
        \label{fig:animat-task2}
    \end{subfigure}
    ~~~~~ \hspace{10mm}
    \begin{subfigure}[t]{0.2\linewidth}
    	\includegraphics[width=40mm, height=30mm]{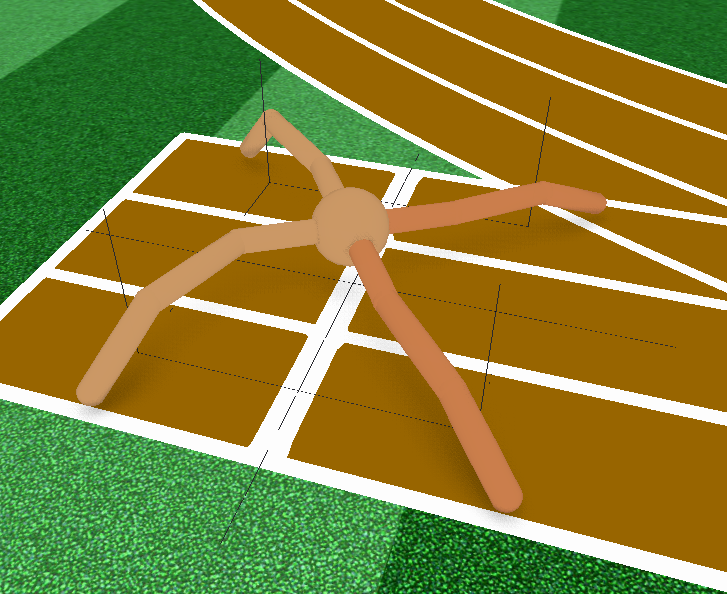}
        \caption{Ant task 1.}
        \label{fig:ant-task1}
    \end{subfigure}
    ~~~~~
    \begin{subfigure}[t]{0.2\linewidth}
    	\includegraphics[width=40mm, height=30mm]{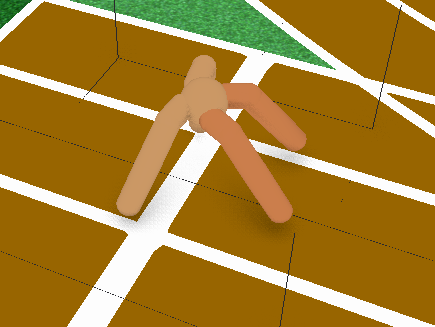}
        \caption{Ant task 2.}
        \label{fig:ant-task2}
    \end{subfigure}
    \caption{Example of task variations. The problem classes correspond to animat (top) with discrete action space, and ant (bottom) with continuous action space. }
    \label{fig:problem-class-variation}
\end{figure}

\section{Empirical Results}

In this section we present experiments for discrete and continuous control tasks. Figures \ref{fig:animat-task1} and \ref{fig:animat-task2} depicts task variations for Animat for the case of discrete action-set. Figures \ref{fig:ant-task1} and \ref{fig:ant-task2} show task variations for Ant problem for the case of continuous action-set. Depiction of task variations in the remaining experiments are given in Appendix C. Implementations used for the discrete case pole-balancing and all continuous control problems, where taken from OpenAI Gym, Roboschool benchmarks \cite{openai_gym}. For the driving task experiments we used a simulator implemented in Unity by Tawn Kramer from the ``Donkey Car'' community \footnote{The Unity simulator for the self-driving task can be found at \url{https://github.com/tawnkramer/sdsandbox}}. We demonstrate that: {\bf 1)} in practice the meta-MDP, $M_\text{meta}$, can be solved using existing reinforcement learning methods, {\bf 2)} the exploration policy learned by the advisor improves performance on existing RL methods, on average, and {\bf 3)} the exploration policy learned by the advisor differs from the optimal exploitation policy for any task $c \in \mathcal C$, i.e., the exploration policy learned by the advisor is \textit{not} necessarily a good exploitation policy. To that end, we will first study the behavior of our method in two simple problem classes with discrete action-spaces: pole-balancing \cite{rl_book} and animat \cite{comdp}, and a more realistic application of control tuning in self-driving vehicles. 


As a baseline meta-learning method, to which we contrast our framework, we chose the recently proposed technique called Model Agnostic Meta Learning (MAML), \cite{maml}, a general meta learning method for adapting previously trained neural networks to novel but related tasks. It is worth noting that, although the method was not specifically designed for RL, the authors describe some promising results in adapting behavior learned from previous tasks to novel ones. 



\subsection{Pole Balancing Problem Class}

In our first experiments on discrete action sets, we used variants of the standard pole-balancing (cart-pole) problem class. 
The agent is tasked with applying force to a cart to prevent a pole balancing on it from falling. The distinct tasks were constructed by modifying the length and mass of the pole mass, mass of the cart and force magnitude.
States are represented by 4-D vectors describing the position and velocity of the cart, and angle and angular velocity of the pendulum, i.e., $s = [x, v, \theta, \dot{\theta}]$. The agent has 2 actions at its disposal; apply a force in the positive or negative $x$ direction.

\begin{figure}
	\begin{subfigure}[t]{0.5\textwidth}
    	\includegraphics[width=\linewidth]{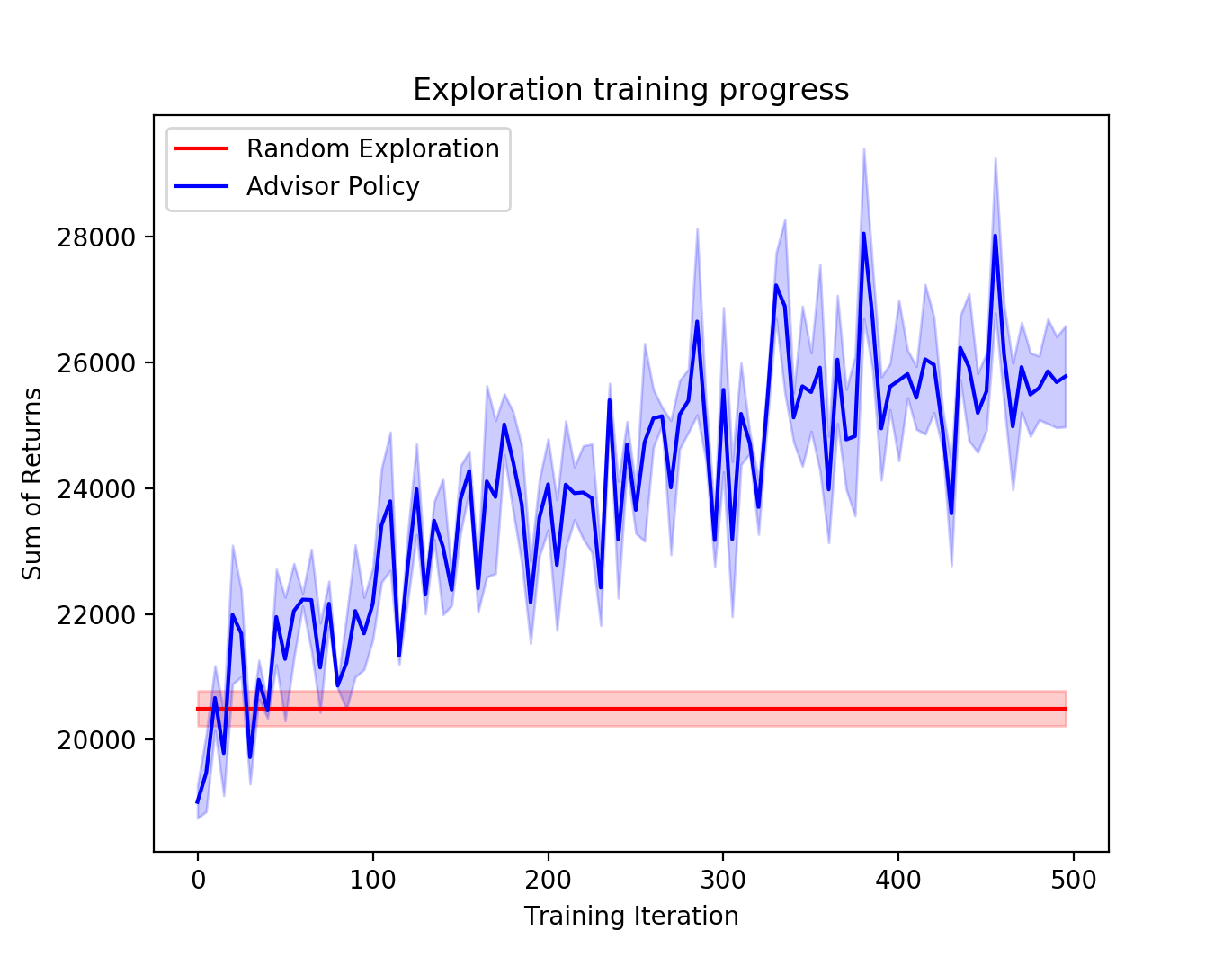}
        \caption{Performance curves during training comparing advisor policy (blue) and random exploration policy (red).}
        \label{fig:meta-training}
    \end{subfigure}	
    ~~~~~
    \begin{subfigure}[t]{0.5\textwidth}
    	\includegraphics[width=\linewidth]{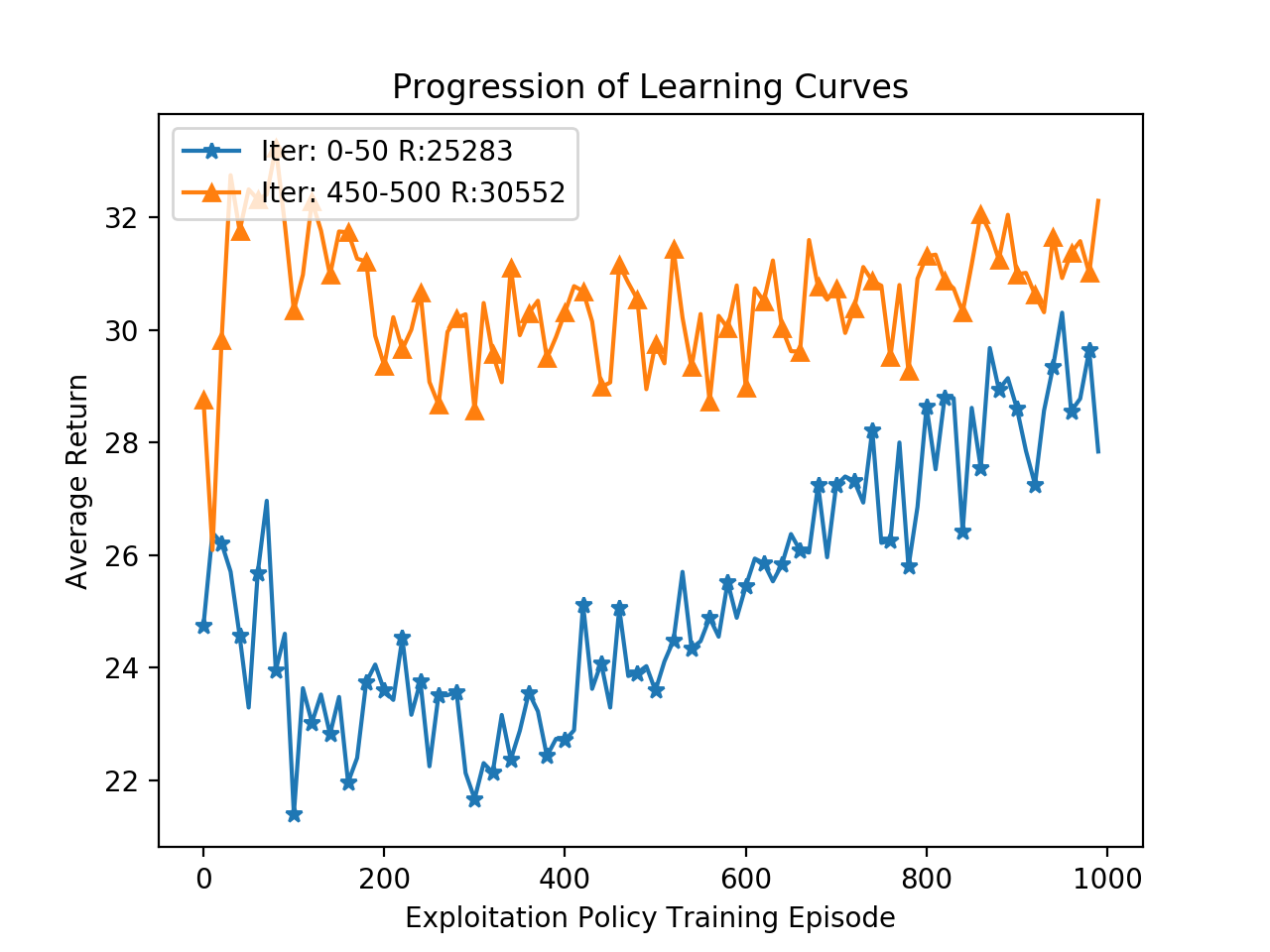}
        \caption{Average learning curves on training tasks over the first 50 advisor episodes (blue) and the last 50 advisor episodes (orange).}
        \label{fig:meta-first-last}
    \end{subfigure}
    
\end{figure}

Figure \ref{fig:meta-training}, contrasts the cumulative return of an agent using the advisor against random exploration during training over 6 tasks, shown in blue and red respectively. Both policies, $\pi$ and $\mu$, were trained using REINFORCE: $\pi$ for $I=1,\!000$ episodes and $\mu$ for $500$ iterations. In the figure, the horizontal axis corresponds to episodes for the advisor. The horizontal red line denotes an estimate (with standard error bar) of the expected cumulative reward over an agent's lifetime if it samples actions uniformly when exploring. Notice that this is not a function of the training iteration, as the random exploration is not updated. The blue curve (with standard error bars from 15 trials) shows how the expected cumulative reward the agent obtains during its lifetime changes as the advisor improves its policy. By the end of the plot, the agent is obtaining roughly $30\%$ more reward during its lifetime than it was when using a random exploration. To visualize this difference, Figure \ref{fig:meta-first-last} shows the mean \textit{learning curves} (episodes of an agent's lifetime on the horizontal axis and average return for each episode on the vertical axis) during the first and last 50 iterations. The mean cumulative reward were $25,\!283$ and $30,\!552$ respectively.

\subsection{Animat Problem Class}

The following set of experiments were conducted in the \emph{animat} problem class. 
In these environments, the agent is a circular creature that lives in a continuous state space. It has 8 independent actuators, angled around it in increments of 45 degrees. Each actuator can be either on or off at each time step, so the action set is $\{0,1\}^8$, for a total of 256 actions. When an actuator is on, it produces a small force in the direction that it is pointing. The resulting action moves the agent in the direction that results from the some of all those forces and is perturbed by 0-mean unit variance Gaussian noise.
The agent is tasked with moving to a goal location; it receives a reward of $-1$ at each time-step and a reward of +100 at the goal state. The different variations of the tasks correspond to randomized start and goal positions in different environments. 

Figure \ref{fig:animat-explore} shows a clear performance improvement on average as the advisor improves its policy over 50 training iterations.
The curve show the average curve obtained over the first and last 10 iteration of training the advisor, shown in blue and orange respectively.
Each individual task was trained for $I=800$ episodes.

An interesting pattern that is shared across all variations of this problem class is that there are actuator combinations that are not useful for reaching the goal. For example, activating actuators at opposite angles would leave the agent in the same position it was before (ignoring the effect of the noise). The presence of these poor performing actions provide some common patterns that can be leveraged. To test our intuition that an exploration policy would exploit the presence of poor-performing actions, we recorded the frequency with which they were executed on unseen testing tasks when using the learned exploration policy after training and when using a random exploration strategy, over 5 different tasks. Figure \ref{fig:animat-actions} helps explain the improvement in performance. It depicts in the y-axis, the percentage of times these poor-performing actions were selected at a given episode, and in the x-axis the agent episode number in the current task. The agent using the advisor policy (blue) is encouraged to reduce the selection of known poor-performing actions, compared to a random action-selection exploration strategy (red).

\begin{figure}[h]
	\begin{subfigure}[t]{0.5\textwidth}
    	\includegraphics[width=\linewidth]{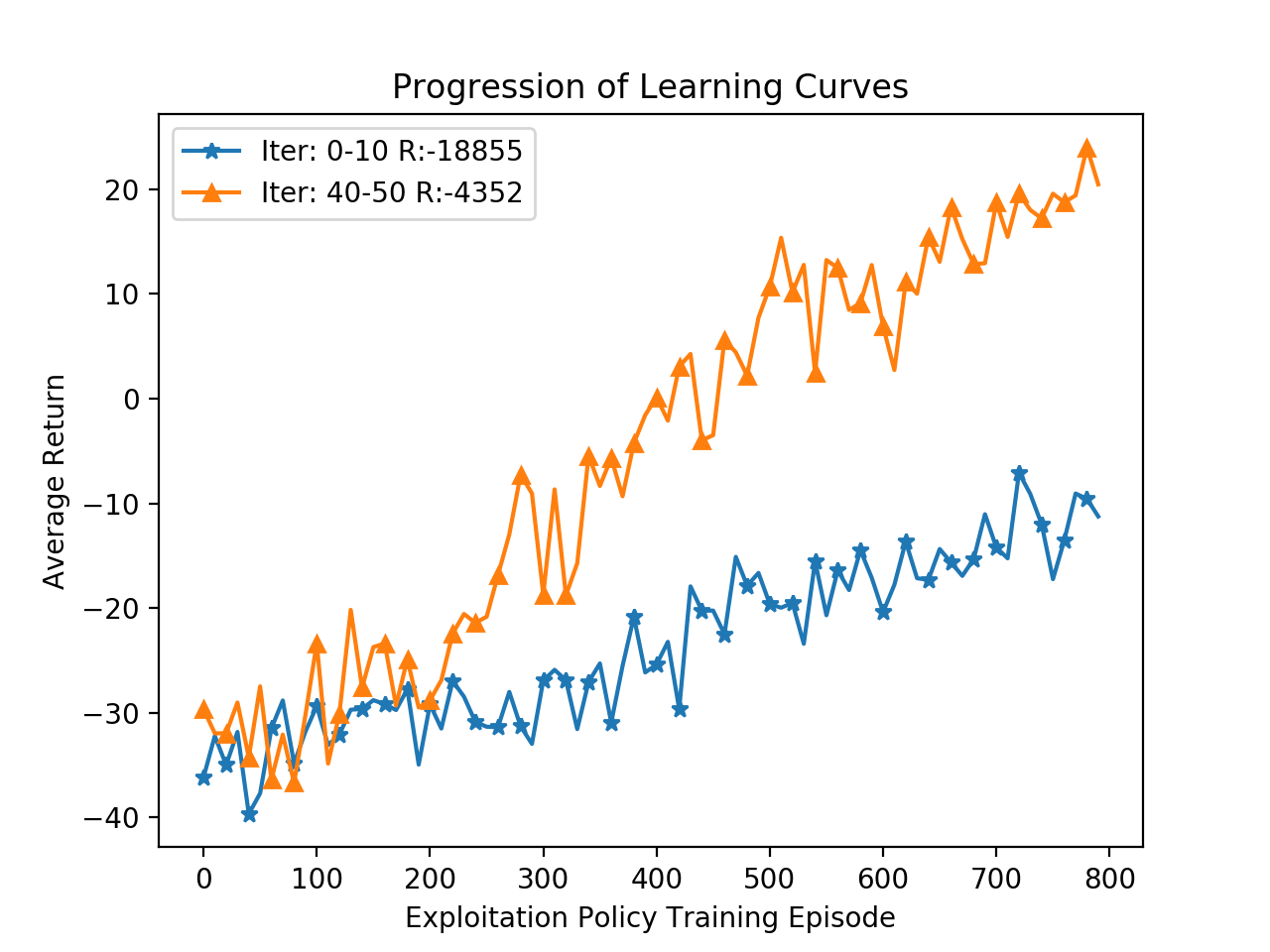}
        \caption{Animat Results: Average learning curves on training tasks over the first 10 iterations (blue) and last 10 iterations (orange).}
        \label{fig:animat-explore}
    \end{subfigure}	
    ~~~~~
    \begin{subfigure}[t]{0.5\textwidth}
    	\includegraphics[width=\linewidth]{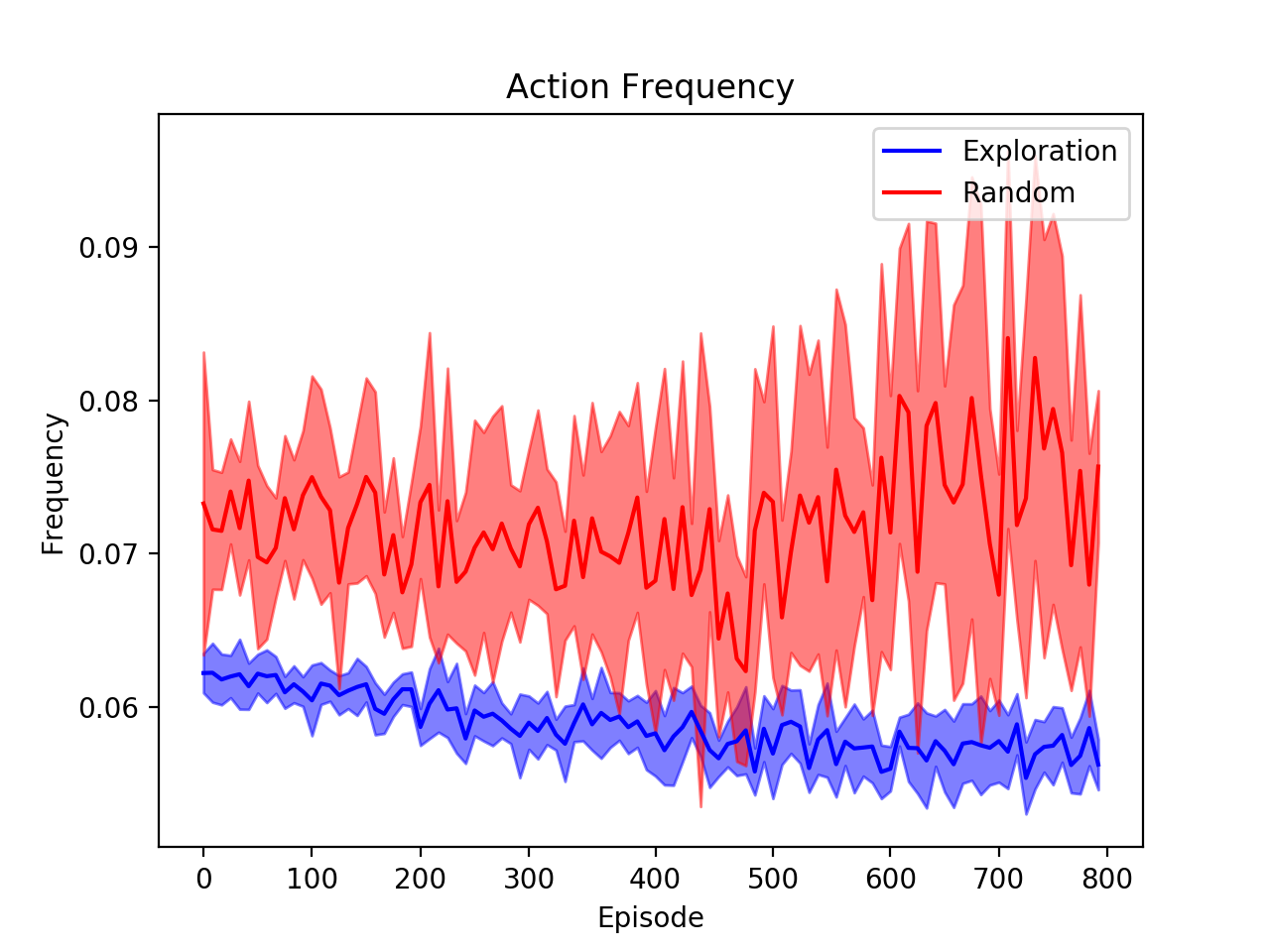}
        \caption{Animat Results: Frequency of poor-performing actions in an agent's lifetime with learned (blue) and random (red) exploration.}
        \label{fig:animat-actions}
    \end{subfigure}
    
\end{figure}

\subsection{Driving Problem Class}
As a more realistic application to which we made reference earlier, we tested the advisor on a control problem using a self-driving car simulator implemented in Unity. We assume that the agent has a constant acceleration (up to some maximum velocity) and the actions consist on 15 possible steering angles between angles $\theta_{\min} < 0$ and $\theta_{\max} > 0$. The state is represented as a stack of the last 4 80x80 images sensed by a front-facing camera, and the tasks vary in the body mass, $m$, of the car and values of $\theta_{\min}$ and $\theta_{\max}$. We tested the ability of the advisor to improve fine-tuning controls to specific cars. We first learned a well-performing policy for one car and used the policy as a starting point to fine-tune policies for 8 different cars. 

\begin{figure}[h]
	\centering
    	\includegraphics[width=0.5\linewidth]{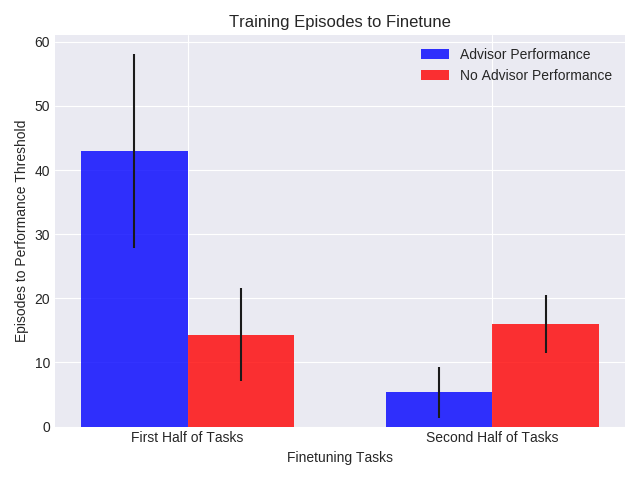}
    \caption{Number of episodes needed to achieve threshold performance on self-driving problem (lower is better). Exploration with advisor is shown in blue and random exploration in red. Left group shows mean number of episodes needed to fine-tune on first half of tasks,  the right group shows the same on the second half of tasks.}
    \label{fig:driving}
\end{figure}

The experiment, depicted in Figure \ref{fig:driving}, compares an agent who is able to use an advisor during exploration for fine-tuning (blue) vs. an agent who does not have access to an advisor (red). The figure the number of episodes of fine-tuning needed to reach a pre-defined performance threshold ($1,000$ time-steps without leaving the correct lane). The first and second groups in the figure show the average number of episodes needed to fine-tune in the first and second half of tasks, respectively. In the first half of tasks (left), the advisor seems to make fine-tuning more difficult since it has not been trained to deal with this specific problem. Using the advisor took an average of 42 episodes to fine-tune, while it took on average 12 episodes to fine-tune without it. The benefit, however, can be seen in the second half of training tasks. Once the advisor had been trained, it took on average 5 episodes to fine-tune while not using the advisor needed an average of 18 episodes to reach the required performance threshold. When the number of tasks is large enough and each episode is a time-consuming or costly process, our framework could result in important time and cost savings.

\subsection{Is an Exploration Policy Simply a General Exploitation Policy?}

One might be tempted to think that the learned policy for exploration might simply be a policy that works well in general. How do we know that the advisor is learning a policy for exploration and not simply a policy for exploitation? To answer this question, we generated three distinct unseen tasks for pole-balancing and animat problem classes and compare the performance of using only the learned exploration policy with the performance obtained by an exploitation policy trained to solve each specific task. 

Figure \ref{fig:explore-exploit} shows two bar charts contrasting the performance of the exploration policy (blue) and the exploitation policy (green) on each task variation. In both charts, the first three groups of bars on the correspond to the performance on each task and the last one to an average over all tasks. Figure \ref{fig:pole-balance-explore-exploit} corresponds to the mean performance on pole-balancing and the error bars to the standard deviation; the y-axis denotes the return obtained. We can see that, as expected, the exploration policy by itself fails to achieve a comparable performance to a task-specific policy. 
The same occurs with the animat problem class, shown in Figure \ref{fig:animat-explore-exploit}. In this case, the y-axis refers to the number of steps needed to reach the goal (smaller bars are better). In all cases, a task-specific policy performs significantly better than the learned exploration policy, indicating that the exploration policy is \emph{not} a general exploitation policy.

\begin{figure}
	\begin{subfigure}[t]{0.5\textwidth}
    	\includegraphics[width=\textwidth]{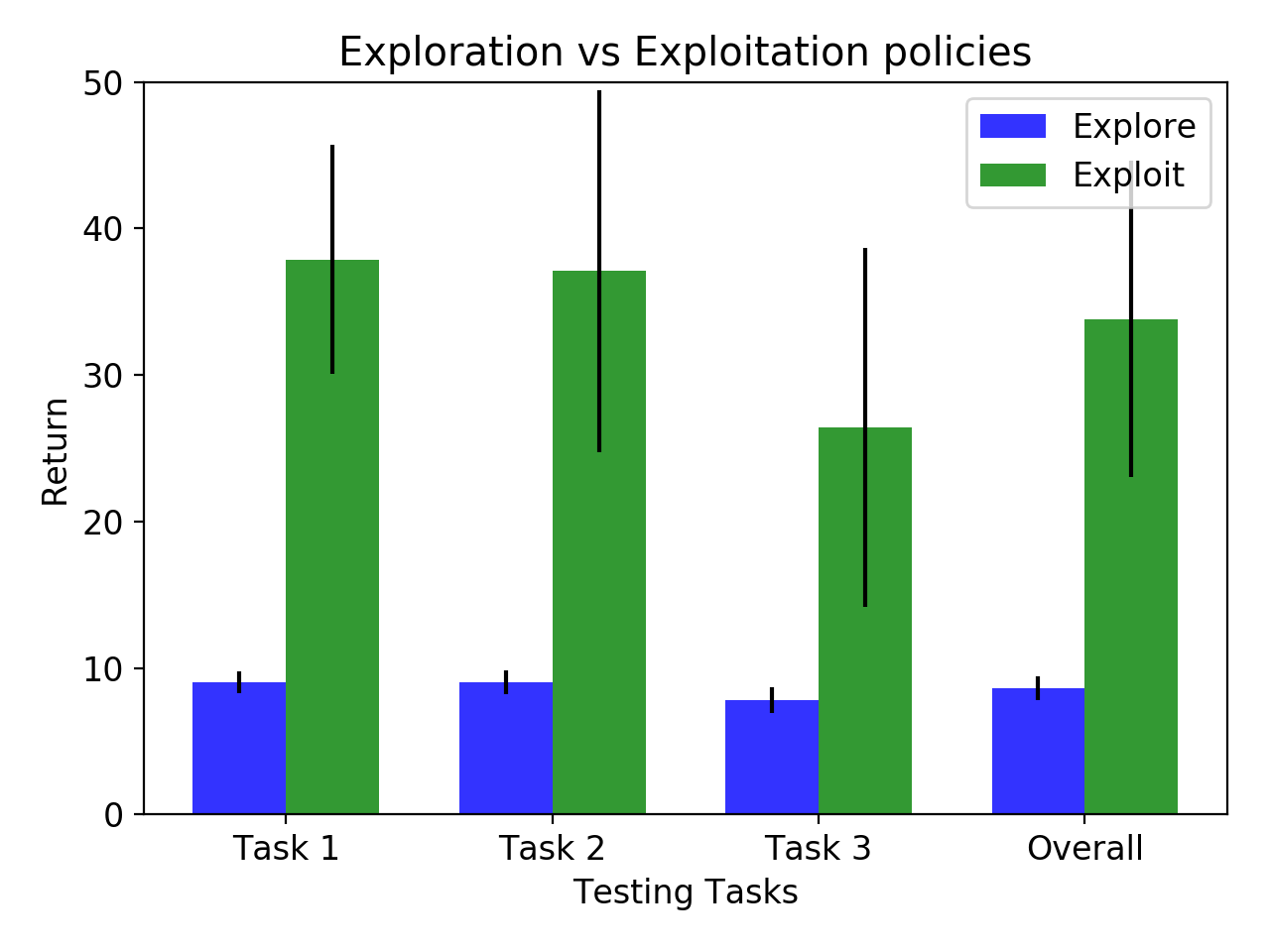}
        \caption{Average returns obtained on test tasks when using the advisor's exploration policy (blue) and a task-specific exploitation (green)}
        \label{fig:pole-balance-explore-exploit}
    \end{subfigure}	
    ~~~~~
    \begin{subfigure}[t]{0.5\textwidth}
    	\includegraphics[width=\textwidth]{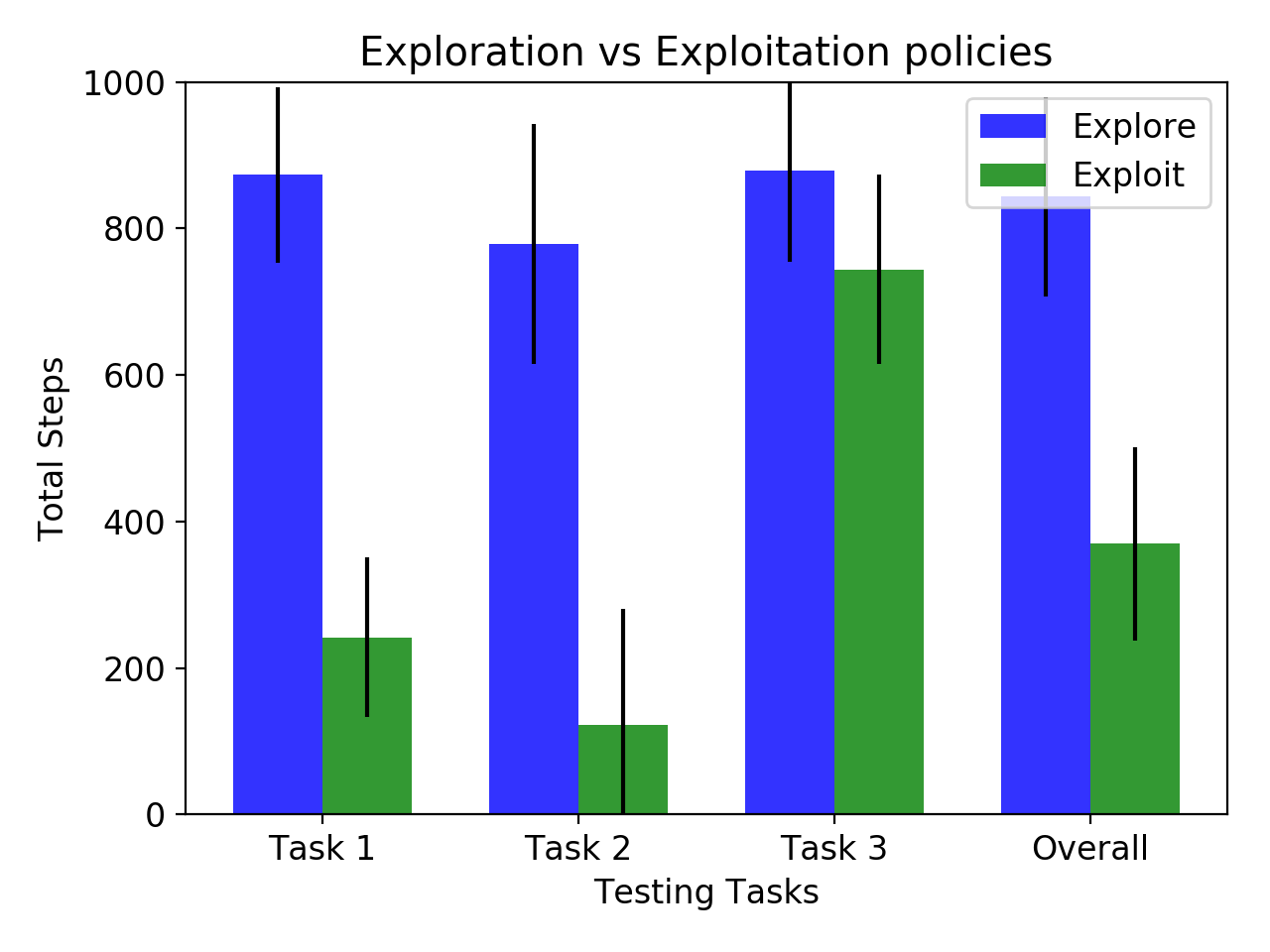}
        \caption{Number of steps needed to complete test tasks with advisor policy (blue) and exploitation (green).}
        \label{fig:animat-explore-exploit}
    \end{subfigure}
    
    \caption{Performance comparison of exploration and exploitation policies.}
    \label{fig:explore-exploit}
\end{figure}

\subsection{Performance Evaluation on Novel Tasks}

In this section we examine the performance of our framework on novel tasks when learning from scratch, and contrast our method to MAML trained using PPO.
In the case of discrete action-sets, we trained each task for 500 episodes and compare the performance of an agent trained with REINFORCE (R) and PPO, with and without an advisor. In the case of continuous tasks, we restrict our experiments to an agent using PPO after training for 500 episodes.
In our experiments we set the initial value of $\epsilon$ to $0.8$, and decreased by a factor of $0.995$ ever episode. The results shown in table \ref{table:eval-discrete} were obtained by training 5 times in 5 novel tasks and recording the average performance and standard deviations. The table displays the mean of those averages and the mean of the standard deviations recorded. The problem classes ``pole-balance (d)'' and ``animat'' correspond to discrete actions spaces, while ``pole-balance (c)'', ``hopper'', and ``ant'' are continuous.

In the discrete case, MAML showed a clear improvement over starting from a random initial policy. However, using the advisor with PPO resulted in a clear improvement in pole-balancing and, in the case of animat, training the advisor with REINFORCE led to an almost $50\%$ improvement over MAML. In the case of continuous control, the first test corresponds to a continuous version of pole-balancing. The second and third set of tasks correspond to the ``Hopper'' and ``Ant'' problem classes, where the task variations were obtained by modifying the length and size of the limbs and body. In all continuous control tasks, using the advisor and MAML led to an clear improvement in performance in the alloted time. In the case of pole-balancing using the advisor led the agent to accumulate almost twice as much reward as MAML, and in the case of Hopper, the advisor led to accumulating 4 times the reward. On the other had, MAML achieved a higher average return in the Ant problem class, but showing high variance. An important takeaway from these results is that in all cases, using the advisor resulted in a clear improvement in performance over a limited number of episodes. This does not necessarily mean that the agent can reach a better policy over an arbitrarily long period of time, but rather that it is able to reach a certain performance level much quicker.

\begin{table*}[t]
\begin{center}
\begin{tabular}{ |c|c|c|c|c|c| } 
\hline
Problem Class & R & R+Advisor & PPO & PPO+Advisor & MAML  \\
\hline
Pole-balance (d) & $20.32 \pm 3.15$ & $28.52 \pm 7.6$ & $27.87 \pm 6.17$  & $\mathbf{46.29 \pm 6.30}$  & $39.29 \pm 5.74$ \\
\hline
Animat & $-779.62 \pm 110.28$ & $\mathbf{-387.27 \pm 162.33}$ & $-751.40 \pm 68.73$ & $-631.97 \pm 155.5$ & $-669.93 \pm 92.32$ \\
\hline
Pole-balance (c) & --- & ---  & $29.95 \pm 7.90$  & $\mathbf{438.13 \pm 35.54}$  & $267.76 \pm 163.05$ \\
\hline
Hopper & --- & --- & $13.82 \pm 10.53$  & $\mathbf{164.43 \pm 48.54}$  & $39.41 \pm 7.95$ \\
\hline
Ant & --- & --- & $-42.75 \pm  24.35$  & $83.76 \pm 20.41$   & $\mathbf{113.33 \pm 64.48}$ \\
\hline
\end{tabular}
\end{center}
\caption{Average performance (and standard deviations) on discrete and continuous control unseen tasks over the last 50 episodes.}
\label{table:eval-discrete}
\end{table*}

\section{Conclusion}

In this work we developed a framework for leveraging experience to guide an agent's exploration in novel tasks, where the \textit{advisor} learns the exploration policy used by the \textit{agent} solving a task. We showed that a few sample tasks can be used to learn an exploration policy that the agent can use improve the speed of learning on novel tasks. A takeaway from this work is that oftentimes an agent solving a new task may have had  experience with similar problems, and that experience can be leveraged. One way to do that is to learn a better approach for exploring in the face of uncertainty. A natural future direction from this work use past experience to identify \emph{when} exploration is needed and not just what action to take when exploring.

\bibliographystyle{plain}
\bibliography{references}

\clearpage

\section{Appendix A - Proof of Theorem 1}

To show that an optimal policy of $M_\text{meta}$ is an optimal exploration policy, we will first establish the following equality to help us in our derivation.

\textbf{Lemma 1.} 
\begin{equation}
\begin{aligned}
\sum_{s \in \mathcal S} \sum_{a \in \mathcal A} \sum_{s' \in \mathcal S}   \Pr(A_k=a|S_k=s) \nonumber  \Pr(S_{k+1}=s'|S_k=s,S_k=a) R_c(s,a,s') \nonumber = \sum_{x \in \mathcal X} \sum_{u \in \mathcal A}\sum_{x' \in \mathcal X} \Pr(U_k=u|X_k=x) T(x,u,x') Y(x,u,x') \nonumber
\end{aligned}
\end{equation}

\begin{proof}

\begin{equation}
\begin{aligned}
&\sum_{s \in \mathcal S} \sum_{a \in \mathcal A} \sum_{s' \in \mathcal S}   \Pr(A_k=a|S_k=s) \Pr(S_{k+1}=s'|S_k=s,S_k=a) R_c(s,a,s') \nonumber \\
=&\sum_{s \in \mathcal S} \sum_{a \in \mathcal A} \sum_{s' \in \mathcal S}   \Pr(A_k=a|S_k=s,\mu) \Pr(S_{k+1}=s'|S_k=s,S_k=a) R_c(s,a,s') \nonumber \\
=&\sum_{s \in \mathcal S} \sum_{a \in \mathcal A} \sum_{s' \in \mathcal S}   \big( \epsilon_i \Pr(A_k=a|S_k=s,\mu) + (1-\epsilon_i) \Pr(A_k=a|S_k=s,\pi) \big)  \Pr(S_{k+1}=s'|S_k=s,S_k=a) R_c(s,a,s') \nonumber \\
=&\sum_{s \in \mathcal S} \sum_{a \in \mathcal A} \sum_{s' \in \mathcal S} \big( \epsilon_i \Pr(A_k=a|S_k=s,\mu) + (1-\epsilon_i) \Pr(A_k=a|S_k=s,\pi) \big) P_c(s,a,s') R_c(s,a,s') \nonumber \\
=& \sum_{s \in \mathcal S} \sum_{a \in \mathcal A} \sum_{s' \in \mathcal S} \big( \epsilon_i \Pr(A_k=a|S_k=s,\mu) P_c(s,a,s') + (1-\epsilon_i) \Pr(A_k=a|S_k=s,\pi)  P_c(s,a,s') \big) R_c(s,a,s') \nonumber \\
=& \sum_{s \in \mathcal S} \sum_{a \in \mathcal A} \sum_{s' \in \mathcal S}  \epsilon_i \Pr(A_k=a|S_k=s,\mu)  P_c(s,a,s') R_c(s,a,s') +  (1-\epsilon_i) \Pr(A_k=a|S_k=s,\pi)  P_c(s,a,s') R_c(s,a,s') )  \nonumber \\
=& \sum_{s \in \mathcal S} \sum_{s' \in \mathcal S} \big(\sum_{u \in \mathcal A} \epsilon_i \Pr(A_k=u|S_k=s,\mu) P_c(s,u,s') R_c(s,u,s') \big) + \big( (1-\epsilon_i) \sum_{a \in \mathcal A} \Pr(A_k=a|S_k=s,\pi) P_c(s,a,s') R_c(s,a,s') \big) \nonumber \\
=& \sum_{s \in \mathcal S} \sum_{s' \in \mathcal S} \big( \sum_{u \in \mathcal A} \epsilon_i \Pr(A_k=u|S_k=s,\mu) P_c(s,u,s') R_c(s,u,s') \nonumber \\
&+ \sum_{u \in \mathcal A} \Pr(A_k=u|S_k=s,\mu)  (1-\epsilon_i) \sum_{a \in \mathcal A} \Pr(A_k=a|S_k=s,\pi)  P_c(s,a,s') R_c(s,a,s') \big) \nonumber \\
=& \sum_{s \in \mathcal S} \sum_{s' \in \mathcal S} \sum_{u \in \mathcal A} \Pr(A_k=u|S_k=s,\mu) \big( \epsilon_i P_c(s,u,s') R_c(s,u,s') +  (1-\epsilon_i) \sum_{a \in \mathcal A} \Pr(A_k=a|S_k=s,\pi) P_c(s,a,s') R_c(s,a,s') \big)  \nonumber
\end{aligned}
\end{equation}

Recall that given task $c$, episode $i$, advisor time-step $k$, the agent state $s_k$ corresponds to the state of the advisor $x_k=(s,i,c,M_k)$. Continuing with our derivation.

\begin{equation}
\begin{aligned}
& \sum_{s \in \mathcal S} \sum_{s' \in \mathcal S} \sum_{u \in \mathcal A} \Pr(A_k=u|S_k=s,\mu) \big(\epsilon_i P_c(s,u,s') R_c(s,u,s') +  (1-\epsilon_i) \sum_{a \in \mathcal A} \Pr(A_k=a|S_k=s,\pi) P_c(s,a,s') R_c(s,a,s') \big) \nonumber \\
=& \sum_{s \in \mathcal S} \sum_{s' \in \mathcal S} \sum_{u \in \mathcal A} \Pr(A_k=u|S_k=s,\mu) \big( \epsilon_i P_c(s,u,s') \nonumber \\
&+  (1-\epsilon_i) \sum_a \Pr(A_k=a|S_k=s,\pi) P_c(s,a,s')  \big)  Y(x=(s,i,c,M_k),u,x'=(s',i,c,M_{k+1}))  \quad\text{(by definition of $Y$)} \nonumber \\
=& \sum_{s \in \mathcal S} \sum_{s' \in \mathcal S} \sum_{u \in \mathcal A}
\Pr(U_k=u|X_k=(s,i,c,M_k),\mu) T(x=(s,i,c,M_k),u,x'=(s',i,c,M_{k+1})) Y(x=(s,i,c,M_k),u,x'=(s',i,c,M_{k+1})) \nonumber \\
=& \sum_{x \in \mathcal X} \sum_{x' \in \mathcal X} \sum_{u \in \mathcal A} \Pr(U_k=u|X_k=x,\mu) T(x,u,x')  Y(x,u,x') \nonumber
\end{aligned}
\end{equation}
\end{proof}

\begin{theorem}
\label{thm:meta_optimal}
An optimal policy for $M_\text{meta}$ is an optimal exploration policy, $\mu^*$.
\end{theorem}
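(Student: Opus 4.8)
The plan is to show that the expected sum of meta-rewards accumulated over one episode of $M_\text{meta}$ equals $\mathbf{E}[\rho(\mu,C)]$ for every advisor policy $\mu$, so that maximizing one objective is the same as maximizing the other, and hence an optimal policy for $M_\text{meta}$ is exactly an optimal exploration policy as defined in \eqref{eq:opt-explore}. The argument proceeds in three stages. First, I would unroll both objectives as sums over advisor time-steps $k=0,\dots,K$ (equivalently, over episodes $i$ and within-episode steps $t$, since $k=iT+t$), using linearity of expectation to write each as a sum of per-step expected rewards. Second, I would condition each per-step expected reward on the current state; on the agent side the relevant state is $s$ together with the bookkeeping $(i,c,M_k)$ it induces, and on the meta side it is precisely $x=(s,i,c,M_k)$ by the definition of $\mathcal X$ in the Proposition. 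Third, I would invoke Lemma~1, which establishes exactly the identity
\[
\sum_{s,a,s'} \Pr(A_k=a\mid S_k=s)\,\Pr(S_{k+1}=s'\mid S_k=s,A_k=a)\,R_c(s,a,s')
= \sum_{x,u,x'} \Pr(U_k=u\mid X_k=x)\,T(x,u,x')\,Y(x,u,x'),
\]
matching the per-step expected reward under the agent's $\epsilon$-greedy mixture of $\mu$ and $\pi_{M_k}$ against the per-step expected meta-reward $Y$ under $T$. Summing this identity over all $k$ gives $\mathbf{E}[\rho(\mu,c)\mid c] = \mathbf{E}[\sum_{k=0}^K Y_k \mid \mu, c]$ for each fixed task $c$, and then taking the expectation over $C\sim d_{\mathcal C}$ — which is accounted for on the meta side by the initial-state distribution $d'_0$ and the task-resampling branch of $T$ at $i=I-1$ — yields $\mathbf{E}[\rho(\mu,C)] = \mathbf{E}[\sum_{k=0}^K Y_k \mid \mu]$.

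Given that equality of objectives, the conclusion is immediate: $\mu^* \in \operatorname{argmax}_\mu \mathbf{E}[\rho(\mu,C)] = \operatorname{argmax}_\mu \mathbf{E}[\sum_k Y_k \mid \mu]$, and the right-hand side is by definition the set of optimal policies of the MDP $M_\text{meta}$ (its value from $d'_0$). So any optimal policy of $M_\text{meta}$ is an optimal exploration policy, and conversely.

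The main obstacle is the careful handling of the memory/bookkeeping components and the episode boundaries when aligning the two sums. Three points need attention: (i) the state $x$ must genuinely be Markov — this requires that $M_k$ summarize everything the learning rule $l$ needs, which is exactly the modeling assumption $M_{k+1}=l(M_k,S_k,A_k,R_k,S_{k+1})$ made before the Proposition, so I would note that the induced process $(X_k)$ is a well-defined Markov chain under any fixed $\mu$; (ii) at terminal states the meta-transition resets $i$ and $M$ (and resamples $c$ at $i=I-1$) with zero meta-reward on those "administrative" transitions, so these steps contribute nothing to either side and the correspondence of rewarded steps is one-to-one; and (iii) I must be consistent about whether the reward at a terminal step is attributed to time-step $t=T$ or to the reset transition — following the definitions in the Proposition, the reward-bearing transitions are the non-terminal ones (plus the final within-episode step), and $\rho$'s double sum $\sum_{i=0}^I\sum_{t=0}^T R_t^i$ matches these exactly. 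Once these are pinned down, the heavy lifting is entirely in Lemma~1, which has already been proved, so the remainder is just assembling the per-step identities into the full-horizon equality.
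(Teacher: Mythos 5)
Your proposal is correct and follows essentially the same route as the paper: it reduces the theorem to the per-step reward identity of Lemma~1, sums that identity over advisor time-steps $k=iT+t$, and takes the expectation over tasks (handled on the meta side by $d_0'$) to obtain $\mathbf{E}[\rho(\mu,C)] = \mathbf{E}\left[\sum_{k=0}^{K} Y_k \,\middle|\, \mu\right]$, from which equality of the argmax sets is immediate. The only (harmless) imprecision is attributing part of the task expectation to the resampling branch of $T$ at $i=I-1$, which governs the start of the next lifetime rather than the current meta-episode; within a single meta-episode the expectation over $C$ is carried entirely by $d_0'$.
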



\begin{proof}
To show that an optimal policy of $M_\text{meta}$ is an optimal exploration policy as defined in this paper, it is sufficient to show that maximizing the return in the meta-MDP is equivalent to maximizing the expected performance. That is,
$\mathbf{E}\left[ \rho(\mu, C) \right] = \mathbf{E}\left[ \sum_{k=0}^K Y_k \middle | \mu, \right]$. 

\begin{equation}
\begin{aligned}
\mathbf{E}\left[ \rho(\mu, C) \right] =& \sum_{c \in \mathcal C} \Pr(C=c) \; \mathbf{E}\left[ \sum_{i=0}^I \sum_{t=0}^T R_t^i \middle | \mu, C=c \right] \nonumber \\
=& \sum_{c \in \mathcal C} \Pr(C=c) \sum_{i=0}^I \sum_{t=0}^T \; \mathbf{E}\left[ R_t^i \middle | \mu, C=c \right] \nonumber \\
=& \sum_{c \in \mathcal C} \Pr(C=c) \sum_{i=0}^I \sum_{t=0}^T \sum_{s \in \mathcal S} \Pr(S_{iT+t}=s | C=c, \mu)  \; \mathbf{E}\left[ R_t^i \middle | \mu, C=c, S_{iT+t}=s \right] \nonumber \\
=& \sum_{c \in \mathcal C} \Pr(C=c) \sum_{i=0}^I \sum_{t=0}^T \sum_{s \in \mathcal S} \Pr(S_{iT+t}=s | C=c, \mu) \nonumber \\ 
&\times \sum_{a \in \mathcal A} \Pr(A_{iT+t}=a| S_{iT+t}) \;  \mathbf{E}\left[ R_t^i \middle | \mu, C=c, S_{iT+t}=s, A_{iT+t}=a \right] \nonumber \\
=& \sum_{c \in \mathcal C} \Pr(C=c) \sum_{i=0}^I \sum_{t=0}^T \sum_{s \in \mathcal S} \Pr(S_{iT+t}=s | C=c, \mu)  \sum_{a \in \mathcal A} \Pr(A_{iT+t}=a| S_{iT+t}=s) \nonumber \\ 
&\times \sum_{s' \in \mathcal S} \Pr(S_{iT+t+1}=s'| S_{iT+t}=s, A_{iT+t}=a) R(s,a,s') \\
=& \sum_{c \in \mathcal C} \Pr(C=c) \sum_{i=0}^I \sum_{t=0}^T \sum_{s \in \mathcal S} \Pr(S_{iT+t}=s | C=c, \mu) \nonumber \\ 
&\times \sum_{a \in \mathcal A} \Pr(A_{iT+t}=a| S_{iT+t}) \; \mathbf{E}\left[ R_t^i \middle | \mu, C=c, S_{iT+t}=s, A_{iT+t}=a \right] \nonumber \\
=& \sum_{c \in \mathcal C} \sum_{i=0}^I \sum_{t=0}^T \sum_{s \in \mathcal S}  \sum_{a \in \mathcal A} \sum_{s' \in \mathcal S} \Pr(C=c)  \Pr(S_{iT+t}=s | C=c, \mu) \nonumber \\ 
&\times  \Pr(A_{iT+t}=a| S_{iT+t}=s)   \Pr(S_{iT+t+1}=s'| S_{iT+t}=s, A_{iT+t}=a) R(s,a,s') \nonumber \\
=& \sum_{c \in \mathcal C} \sum_{k=0}^K \sum_{s \in \mathcal S}  \sum_{a \in \mathcal A} \sum_{s' \in \mathcal S} \Pr(C=c)  \Pr(S_{k}=s | C=c, \mu) \nonumber \\ &\times  \Pr(A_{k}=a| S_{k}=s)   \Pr(S_{k+1}=s'| S_{k}=s, A_{k}=a) R(s,a,s') \nonumber \\ 
=& \sum_{c \in \mathcal C} \sum_{k=0}^K \sum_{x \in \mathcal X}  \sum_{a \in \mathcal A} \sum_{x' \in \mathcal X} \Pr(C=c)  \Pr(X_{k}=x | C=c, \mu) \nonumber \\ 
&\times  \Pr(U_{k}=a| X_{k}=x) T(x,u,x') Y(x,u,x') \quad\text{(by Lemma 1)}  \nonumber \\
=& \sum_{k=0}^K \sum_{x \in \mathcal X}  \sum_{a \in \mathcal A} \sum_{x' \in \mathcal X} \Pr(X_{k}=x | \mu) \\ &\times  \Pr(U_{k}=a| X_{k}=x) T(x,u,x') \; \mathbf{E}\left[ Y_k | X_k=s, U_k=a, X_{k+1}=x', \mu \right] \nonumber \\
=& \sum_{k=0}^K \sum_{x \in \mathcal X}  \sum_{a \in \mathcal A} \Pr(X_{k}=x | \mu) \;  \Pr(U_{k}=a| X_{k}=x) \; \mathbf{E}\left[ Y_k | X_k=x, U_k=a, \mu \right] \\
=& \sum_{k=0}^K \sum_{x \in \mathcal X} \Pr(X_{k}=x | \mu) \; \mathbf{E}\left[ Y_k | X_k=x, \mu \right] \\
=& \sum_{k=0}^K \mathbf{E}\left[ Y_k | \mu \right] \\
=& \mathbf{E}\left[\sum_{k=0}^K Y_k | \mu \right] \nonumber
\end{aligned}
\end{equation}
\end{proof}

\section{Appendix B - Pseudocode}

This section presents pseudocode for the PPO implementation of the proposed meta-MDP framework, which was omitted from the paper for space considerations.

Pseudocode for a PPO implementation of both agent and advisor is given in Algorithm \ref{algo-ppo}. PPO maintains two  parameterized policies for an agent, $\pi$ and $\pi_{old}$. The algorithm runs $\pi_{old}$ for $l$ time-steps and computes the generalized advantage estimates (GAE), \cite{gae}, $\hat{A_{s_1}}, \dots, \hat{A_{s_l}}$, where $\hat{A_{s_t}} = \delta_{s_t} + (\gamma \lambda) \delta_{s_{t+1}} +\dots + (\gamma \lambda)^{l-t+1} \delta_{s_{l-1}}$ and $\delta_{s_t} = r_t + \gamma V(s_{t+1}) - V(s_t)$.  

The objective function seeks to maximize the following objective for time-step $t$:
\begin{equation}
\begin{aligned}
J &= \mathbf{E}_t \left[ \min(r_t \hat{A_t}, \text{clip}(r_t, 1-\alpha, 1+\alpha) \hat{A_t}) \right] \\
&- (R_t + \gamma \; \hat{V}(s_{t+1}) - \hat{V}(s_t))^2 
\end{aligned}
\end{equation}

where $r_t = \frac{\pi(a_t|s_t)}{\pi_{old}(a_t|s_t)}$, and $\hat{V}(s)$ is an estimate of the value for state $s$. 
The updates are done in mini-batches that are stored in a buffer of collected samples.

To train the agent and advisor with PPO we defined two separate sets of policies: $\mu$ and $\mu_{old}$ for the advisor, and $\pi$ and $\pi_{old}$ for the agent. The agent collects samples of trajectories of length $l$ to update its policy, while the advisor collects trajectories of length $n$, where $l < n$. So, $J$ (the objective of the agent) is computed with $l$ samples while $J_{meta}$ (the objective of the advisor) is computed with $n$ samples.
In our experiments, setting $n \geq 2l$ seemed to give the best results.

Notice that the presence of a buffer to store samples, means that the advisor will be storing samples obtained from many different tasks, which prevents it from over-fitting to one particular problem.

\begin{algorithm}
\caption{Agent + Advisor - PPO}
\label{algo-ppo}
\begin{algorithmic}[1]
\State Initialize advisor policy $\mu$, $\mu_{old}$ randomly
\For{$i_{meta}=0,1,\dots,I_{meta}$}
	\State Sample task $c$ from $d_c$
	\For{$i=0,1,\dots,I$}
		\State Initialize $\pi$ and $\pi_{old}$
		\State $s_t \sim d^c_0$ 
		\State $x_t$ = $(s_t, i, c)$
		\For{$t=0,1,\dots,T$}
			\State $a_t \sim \left\{
                \begin{array}{ll}
                  \mu_{old} \text{ with probability } \epsilon_i  \\
                  \pi_{old} \text{ with probability } (1-\epsilon_i)\\
                \end{array}
              \right.$
              
              \State take action $a_t$, observe $s_t$, $r_t$

			  \If{$t \; \% \; l = 0$}
              		\State compute $\hat{A_{s_1}}, \dots, \hat{A_{s_l}}$
                    \State optimize $J$ w.r.t $\pi$
                    \State $\pi_{old} = \pi$
              \EndIf
              \If{$t \; \% \; n = 0$}
              		\State compute $\hat{A_{x_1}}, \dots, \hat{A_{x_n}}$
                    \State optimize $J_{meta}$ w.r.t $\mu$
                    \State $\mu_{old} = \mu$
              \EndIf
		\EndFor	
     \EndFor				
\EndFor  
\end{algorithmic}
\end{algorithm}

\clearpage

\section{Appendix C - Task Variations}

This section shows variations of each problem used for experiments.

\textbf{Pole-balancing:} Task variations in this problem class were obtained by changing the mass of the cart, the mass and the length of the pole.
\begin{figure}[h]
    \begin{subfigure}[t]{0.5\textwidth}
        \centering
    	\includegraphics[width=60mm, height=45mm]{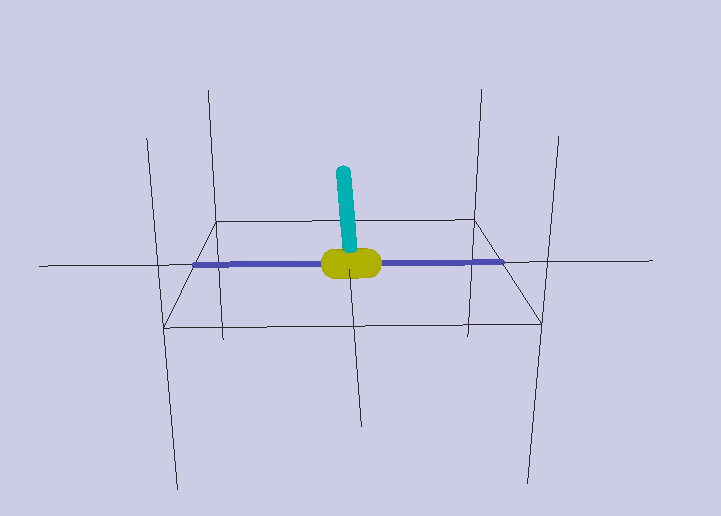}
        \caption{Pole-balancing task 1.}
        \label{fig:pole-task1}
    \end{subfigure}
    ~~~~~
      ~~~~~ \hspace{5mm}
    \begin{subfigure}[t]{0.5\textwidth}
        \centering
    	\includegraphics[width=60mm, height=45mm]{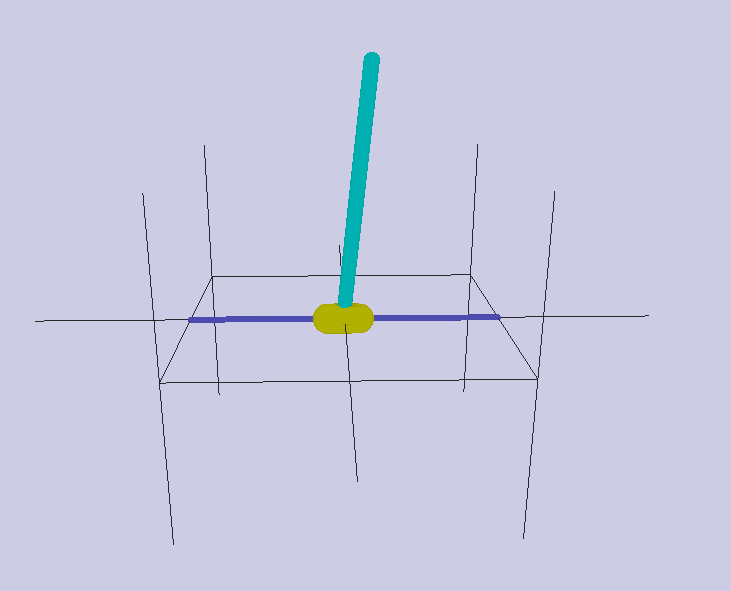}
        \caption{Pole-balancing task 2.}
        \label{fig:pole-task2}
    \end{subfigure}
    \caption{Experiments 1 of task variations with discrete action space.}
\end{figure}

\textbf{Animat:} Task variations in this problem class were obtained by randomly sampling new environments, and changing the start and goal location of the animat.
\begin{figure}[h]
\begin{subfigure}[t]{0.5\textwidth}
        \centering
    	\includegraphics[width=60mm, height=45mm]{animat.png}
        \caption{Animat task 1.}
        \label{fig:animat-task1}
    \end{subfigure}
    ~~~~~
      ~~~~~ \hspace{5mm}
    \begin{subfigure}[t]{0.5\textwidth}
        \centering
    	\includegraphics[width=60mm, height=45mm]{animat2.png}
        \caption{Animat task 2.}
        \label{fig:animat-task2}
    \end{subfigure}
    \caption{Experiments 2 of task variations with discrete action space.}
\end{figure}

\textbf{Driving Task:} Task variations in this problem class were obtained by changing the mass of the car and turning radius. A decrease in body mass and increase in turning radius causes the car to drift more and become more unstable when taking sharp turns.

\begin{figure}[h]
    \begin{subfigure}[t]{0.5\textwidth}
        \centering
    	\includegraphics[width=60mm, height=45mm]{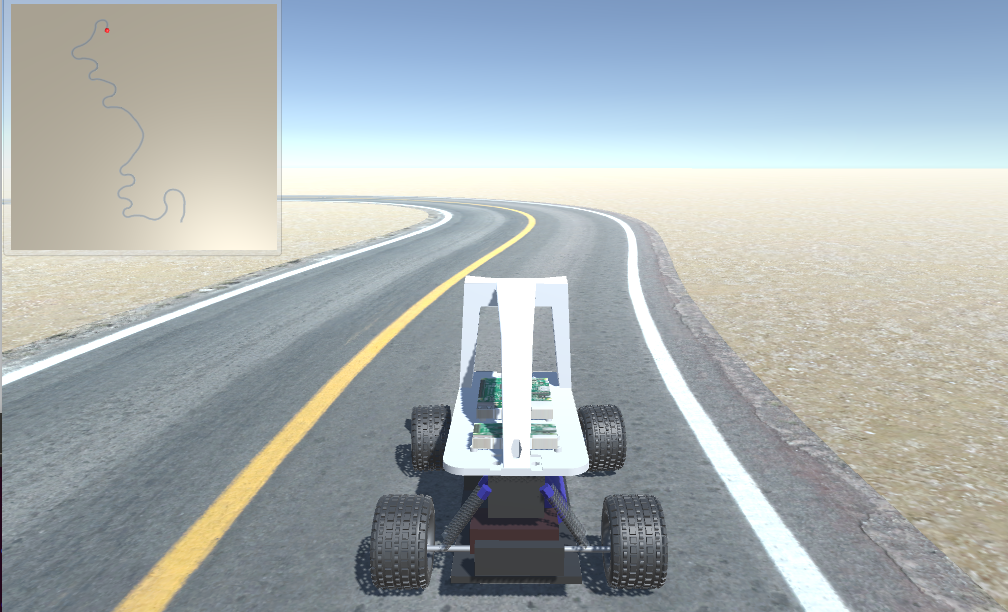}
        \caption{Driving task 1.}
        \label{fig:driving-task1}
    \end{subfigure}
    ~~~~~
      ~~~~~ \hspace{5mm}
    \begin{subfigure}[t]{0.5\textwidth}
        \centering
    	\includegraphics[width=60mm, height=45mm]{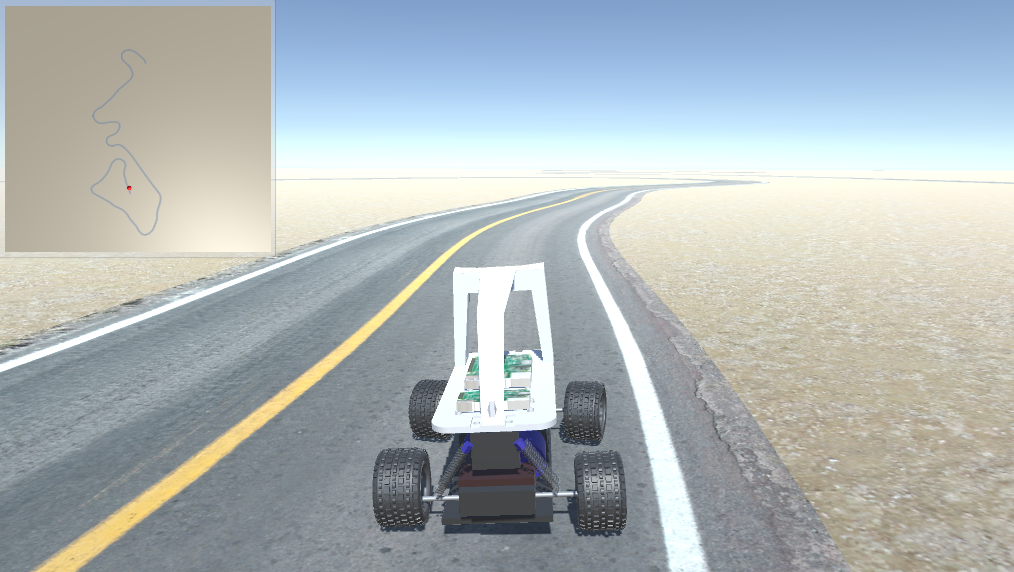}
        \caption{Driving task 2.}
        \label{fig:driving-task2}
    \end{subfigure}
    \caption{Experiments 3 of task variations with discrete action space.}
    \label{fig:problem-class-variation-disc}
\end{figure}

\vfill\null

\textbf{Hopper:} Task variations for Hopper consisted in changing the length of the limbs, causing policies learned for other hopper tasks to behave erratically.
\begin{figure}[h]
    \begin{subfigure}[t]{0.5\textwidth}
        \centering
    	\includegraphics[width=60mm, height=45mm]{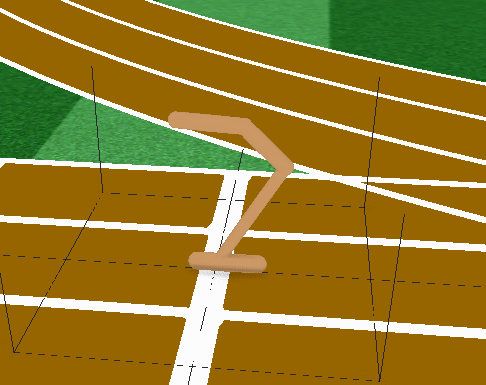}
        \caption{Hopper task 1.}
        \label{fig:hopper-task1}
    \end{subfigure}
    ~~~~~
      ~~~~~ \hspace{5mm}
    \begin{subfigure}[t]{0.5\textwidth}
        \centering
    	\includegraphics[width=60mm, height=45mm]{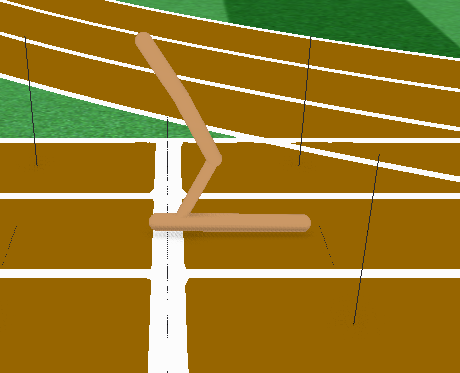}
        \caption{Hopper task 2.}
        \label{fig:hopper-task2}
    \end{subfigure}
    \caption{Experiments 1 of task variations with continuous action space.}
\end{figure}

\textbf{Ant:} Tasks variation for Ant consisted in changing the length of the limbs in the ant and the size of the body.
\begin{figure}[h]
    \begin{subfigure}[t]{0.5\textwidth}
        \centering
    	\includegraphics[width=60mm, height=45mm]{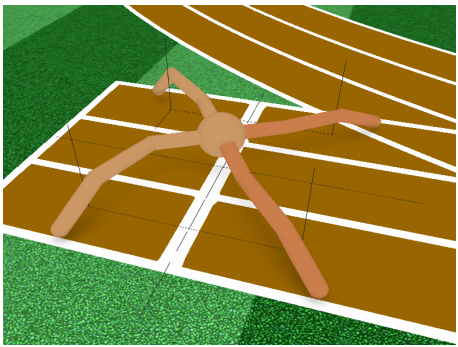}
        \caption{Ant task 1.}
        \label{fig:ant-task1}
    \end{subfigure}
    ~~~~~
      ~~~~~ \hspace{5mm}
    \begin{subfigure}[t]{0.5\textwidth}
        \centering
    	\includegraphics[width=60mm, height=45mm]{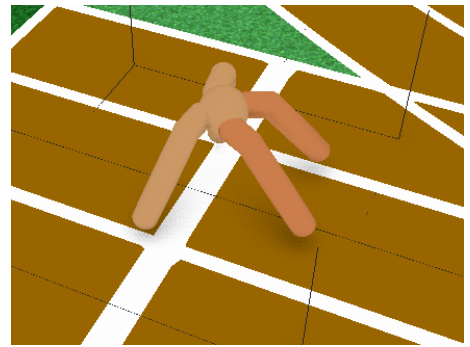}
        \caption{Ant task 2.}
        \label{fig:ant-task2}
    \end{subfigure}
    
    \caption{Experiments 2 of task variations with continuous action space.}
    \label{fig:problem-class-variation-cont}
\end{figure}

\end{document}